\newcommand{\BlackBox}{\rule{1.5ex}{1.5ex}}  
\newenvironment{proof}{\par\noindent{\bf Proof\ }}{\hfill\BlackBox\\[2mm]}
\newtheorem{theorem}{Theorem}
\newtheorem{lemma}[theorem]{Lemma}
\DeclareMathAlphabet{\mathpzc}{OT1}{pzc}{m}{it}
\newcommand{\prob}{\Pr}
\newcommand{\expect}{\mathbbm{E}}
\newcommand{\history}{\mathcal{H}}
\newcommand{\myset}[1]{\mathcal{#1}}
\newcommand{\Oo}{\myset{O}}
\newcommand{\Tt}{\myset{T}}
\newcommand{\Nn}{\myset{N}}
\newcommand{\Ss}{\myset{S}}
\newcommand{\Ff}{\myset{F}}
\newcommand{\E}{\mathbb{E}}
\newcommand{\supp}{\mathsf{supp}}
\newtheorem{property}{Property}
\newcommand{\lst}[1]{\ensuremath{\prec_{#1}}}
\newcommand{\lsteq}[1]{\ensuremath{\preccurlyeq_{#1}}}
\newcommand{\grt}[1]{\ensuremath{\succ_{#1}}}
\newcommand{\eqvl}[1]{\ensuremath{\sim}_{#1}}
\newcommand{\oracle}{\ensuremath{\text{Oracle}}}
\newcommand{\order}{\texttt{order}}
\newcommand{\add}{\texttt{add}}
\DeclareFixedFont{\auacc}{OT1}{phv}{m}{n}{12}
\title{From Small-World Networks to Comparison-Based Search}
\author{Amin Karbasi,  \IEEEmembership{Member, IEEE},
Stratis Ioannidis, \IEEEmembership{Member, IEEE},
and Laurent Massouli\'e,  \IEEEmembership{Member, IEEE}
\thanks{{\footnotesize 
     Amin Karbasi is with the department of computer science at ETHZ, Switzerland (email: amin.karbasi@inf.ethz.ch). 
      Stratis Ioannidis is with the Technicolor Palo Alto, CA-94301, USA (email: stratis.ioannidis@technicolor.com). Laurent Massouli is with the MSR-INRIA joint research center, \'Ecole Polytechnique, Paris, France (email: laurent.massoulie@inria.fr).
}}  }
\begin{document}

\maketitle

\begin{abstract}
The problem of content search through comparisons has recently received considerable attention. In short, a user searching for a target
 object navigates through a database in the following manner: the user is asked to
 select the object most similar to her target from a small list of objects. A new object list
 is then presented to the user based on her earlier selection. This process is repeated
until the target is included in the list presented, at which point the search terminates.

This problem is known to be strongly related to the small-world network design problem. However, contrary to prior work, which focuses on cases where objects in the database are equally popular, we consider here the case where the demand for objects may be heterogeneous. 


We show that, under heterogeneous demand, the small-world network design problem is NP-hard. 
Given the above negative result, we propose a novel mechanism  for small-world design and provide an upper bound on its performance under heterogeneous demand. The above mechanism has a natural equivalent in  the context of  content search through comparisons, and we establish both an upper bound and a lower bound for the performance of this mechanism. These bounds are intuitively appealing, as they depend on the entropy of the demand  as well as its doubling constant, a quantity capturing the topology of the set of target objects. They also illustrate interesting connections between comparison-based search to classic results from information theory. Finally, we propose  an adaptive learning algorithm for content search that meets the performance guarantees achieved by the above mechanisms.

\end{abstract}
\pagenumbering{arabic}

\section{Introduction}\label{section:intro}

 \IEEEPARstart{T}{h}e problem we study in this paper is content search through comparisons. 
In short, a user searching for a target object navigates through a database in the following manner. The user is asked to select the object most similar to her target from a small list of objects. 
 A new object list is then presented to the user based on her earlier selection.  This  process is repeated until the target is included in the list presented, at which point the search terminates. 

Searching through comparisons is typical example of  \emph{exploratory search}~\cite{exploratory}, the need for which arises when users are unable to state and submit explicit queries to the database. Exploratory search has several important real-life applications. An often-cited example is navigating through a database of pictures of humans in which  subjects are photographed under diverse uncontrolled conditions~\cite{Facebrowsing,Suhas}. For example, the pictures may be taken outdoors, from different angles or distances, while the subjects assume different poses, are partially obscured, \emph{etc.} Automated methods may fail to extract meaningful features from such photos, so the database cannot be queried in the traditional fashion. On the other hand, a human searching for a particular person can easily select from a list of pictures the subject most similar to the person she has in mind. 

Users may also be unable to state queries because, \emph{e.g.}, they are unfamiliar with the search domain, or do not have a clear target in mind. For example, a novice classical music listener may not be able to express that she is, \emph{e.g.}, looking for a fugue or a sonata.   She might however identify among samples of different musical pieces the closest to the one she has in mind. 
Alternatively, a user surfing the web may not know a priori which post she wishes to read; presenting a list of blog posts and letting the surfer identify  which one she likes best can steer her in the right direction. 

In all the above applications, the problem of content search through comparisons amounts to determining which objects to present to the user in order to find the target object as quickly as possible. Formally, the behavior of a human user can be modeled by a so-called \emph{comparison oracle} introduced by \cite{Lifshits08}: given a target and a choice between two objects, the  oracle outputs the one closest to the target. The goal is thus to find a sequence of proposed pairs of objects that leads to the target object with as few oracle queries as possible. 
This problem was introduced by \cite{Lifshits08} and has recently received considerable attention (see, for example, \cite{Lifshits09,Suhas,Facebrowsing}). 

Content search through comparisons is also naturally related to the following problem: given a graph embedded in a metric space, how should one augment this graph by adding edges in order to minimize the expected cost of greedy forwarding over this graph? This is known as the  \emph{small-world network design} problem (see, for example, \cite{FraigneaudDoubling,Fraign}) and has a variety of applications as, \emph{e.g.}, in network routing. 
In this paper, we consider both  problems under the scenario of \emph{heterogeneous demand}. This is very interesting in practice: objects in a database are indeed unlikely to  be requested with the same frequency. 
Our contributions are as follows:

\begin{itemize}
\item  We show that the  small-world network design problem under general heterogeneous demand is NP-hard. Given earlier work on this problem under homogeneous demand \cite{Fraign,FraigneaudDoubling}, this result is interesting in its own right.
\item We propose a novel mechanism  for edge addition in the small-world design problem, and provide an upper bound on its performance. 
\item The above mechanism has a natural equivalent in  the context of content search through comparisons, and  we provide a matching upper bound for the performance of this mechanism.
\item  We also establish a lower bound on any mechanism solving the content search through comparisons problem. 
\item Finally, based on these results, we propose  an adaptive learning algorithm for content search that, given access only to a comparison oracle, can meet the performance guarantees achieved by the above mechanisms.
\end{itemize}

To the best of our knowledge, we are the first to study the above two problems in a setting of heterogeneous demand.  
Our analysis is intuitively appealing because our upper and lower bounds relate the cost of content search to two important properties of the demand distribution, namely its \emph{entropy} and its \emph{doubling constant}.
We thus provide performance guarantees in terms of the \emph{bias} of the distribution of targets, captured by the entropy, as well as the \emph{topology} of their embedding, captured by the doubling constant. 

The remainder of this paper is organized as follows. In Section~\ref{section:related} we provide an overview of the related work in this area. In Sections~\ref{section:definitions} and \ref{section:problemstatement} we introduce our notation and formally state the two problems that are the focus of this work, namely content search through comparisons and small-world network design.  We present our main results in Section~\ref{section:mainresults} and our adaptive learning algorithm in Section \ref{section:learning}. Section~\ref{section:proofs} is devoted to the proofs of our main theorems. We then address the two extensions of our work in Section~\ref{sec:extensions} and finally conclude in Section~\ref{section:conclusions}.

\section{Related Work}\label{section:related}
Content search through comparisons is a special case of nearest neighbour search (NNS), a problem that has been extensively studied \cite{Clarkson,Indyk98}. 
Our work can be seen as an extension of earlier work \cite{Karger,Krauthgamer,Clarkson} considering the NNS problem for objects embedded in a metric space with a small intrinsic dimension.
In particular, authors in~\cite{Krauthgamer} introduce navigating nets, a deterministic data structure for supporting  NNS  in  doubling metric spaces.  A similar technique was considered  by~\cite{Clarkson} 
for objects embedded in a space satisfying a certain sphere-packing property, while \cite{Karger} relied on growth restricted metrics; all of the above assumptions have connections to the doubling constant we consider in this paper.
In all of these works, however, the underlying metric space is fully observable by the search mechanism while, in our work, we are restricted  to accesses to a comparison oracle. Moreover, in all of the above works, the demand over the target objects is assumed to be homogeneous.

NNS with access to a comparison oracle   was first introduced by ~\cite{Lifshits08}, and further explored by~\cite{Lifshits09,Suhas,Facebrowsing}. A considerable advantage of the above works is that the assumption that objects are a-priori embedded in a metric space is removed; rather than requiring that similarity between objects is captured by a distance metric, the above works only assume  that any two objects can be ranked in terms of their similarity to any targer by the comparison oracle. 
To provide performance guarantees on the search cost, ~\cite{Lifshits08} introduced a so-called ``disorder-constant'', capturing the degree to which object rankings violate the triangle inequality. This disorder-constant plays roughly the same role in their analysis as the doubling constant does in ours. Nevertheless, these works also assume homogeneous demand, so our work can be seen as an extension of searching with comparisons to heterogeneity, with the caveat of restricting our analysis to the case where a metric embedding exists.

An additional important distinction between \cite{Lifshits08,Lifshits09,Suhas,Facebrowsing} and our work is the existence of a learning phase, during which
explicit questions are placed to the comparison oracle. A data-structure is constructed during this phase, which is subsequently used to answer queries submitted to the database during a ``search'' phase. The above works establish different tradeoffs between the length of the learning phase, the space complexity of the data structure created, and the cost incurred during searching.
 In contrast, the learning scheme we consider in Section~\ref{section:learning} is adaptive, and learning occurs while users search; the drawback lies in that our guarantees on the search cost are asymptotic. Again, the main advantage of our approach lies in dealing with heterogeneity.

The use of interactive methods (\emph{i.e.}, that incorporate human feedback) for content search has a long history in literature. Arguably, the first oracle considered to model such methods is the so-called membership oracle \cite{garey72},  which allows the search mechanism to ask a user questions of the form ``does the target belong to set $A$'' (see also our discussion in Section~\ref{section:entropy}). \cite{branson10}  deploys such an interactive method for object classification and evaluate it on the \textit{Animals with attributes} database. A similar approach was used by \cite{geman93} who formulated  shape recognition as a coding problem and  applied this approach to handwritten numerals and satellite images. Having access to a membership oracle however is a strong assumption, as humans may not necessarily be able to answer queries of the above type for \emph{any} object set $A$. Moreover, the large number of possible sets makes the cost of designing optimal querying strategies over large datasets prohibitive. In contrast, the comparison oracle model makes a far weaker assumption on human behavior---namely, the ability to compare different objects to the target---and significantly limits the design space, making search mechanisms using comparisons practical even over large datasets. 

The design of small-world networks (also called navigable networks) has received a lot of attention after the seminal work of \cite{kleinberg}. 
Our work is most similar to~\cite{FraigneaudDoubling}, where a condition under which graphs embedded in a doubling metric space can be made navigable is identified. The same idea was explored in more general spaces by \cite{Fraign}. Again, the main difference in our approach to small world network design lies in considering heterogeneous demand, an aspect of small-world networks not investigated in earlier work. 

The relationship between the small-world network design and content search has been also observed in earlier work \cite{Lifshits08} and was exploited by \cite{Lifshits09} in proposing their data structures for content search through comparisons; we further expand on this issue in Section~\ref{section:relationship}, as this is an approach we also follow.


%
%

\section{Definitions and Notation}\label{section:definitions}
In this section we introduce some definitions and notation which will be used throughout this paper.
\subsection{Objects and Metric Embedding}
Consider a set of objects $\Nn$, where $|\myset{N}|=n$. We assume that there exists a metric space $(\myset{M}$,$d$),  
where $d(x,y)$ denotes the distance between $x,y\in \myset{M}$, 
such that objects in $\Nn$ are embedded in   $(\myset{M}$,$d$): \emph{i.e.}, there exists a one-to-one mapping from $\myset{N}$ to a subset of $\myset{M}$. 

\sloppy The objects in $\Nn$ may represent, for example, pictures in a database. The metric embedding can be thought of as a mapping of the database entries to a set of features (\emph{e.g.}, the age of person depicted, her hair and eye color, \emph{etc.}). The distance between two objects would then capture how ``similar'' two objects are w.r.t.~these features. In what follows, we will abuse notation and write $\myset{N}\subseteq\myset{M}$, keeping in mind that there might be difference between the physical objects (the pictures) and their embedding (the attributes that characterize them).
\fussy

Given an object $z\in \Nn$, we can order objects according to their distance from $z$. We will write $x \lsteq{z} y$ if $d(x,z)\leq d(y,z)$. 
 Moreover, we will write $x\eqvl{z}y$  if $d(x,z)=d(y,z)$ and $x\lst{z} y$ if $x \lsteq{z} y$ but not $x\eqvl{z}y$. Note that $\eqvl{z}$ is an equivalence relation, and hence partitions $\Nn$ into equivalence classes. Moreover, $\lsteq{z}$ defines a total order over these  equivalence classes, with respect to their distance from $z$. 
Given a non-empty set $A\subseteq \Nn$, we denote by $\min_{\lsteq{z}}A$ the object in $A$ closest to $z$, \emph{i.e.}
$$\min_{\lsteq{z}}A=w\in A ~ \text{s.t.}~w\lsteq{z}v$$ for all $v\in A$.

\subsection{Comparison Oracle}\label{section:oracle}
A \emph{comparison oracle} \cite{Lifshits08} is an oracle that, given two objects $x,y$ and a target $t$, returns the closest object to $t$. More formally, 
\begin{equation}\label{oracle}
\text{\oracle}(x,y,t)=\left\{
\begin{array}{ll} 
x & \text{if } x\lst{t} y,\\ 
y & \text{if } x\grt{t} y,\\
x\text{ or }y &  \text{if } x\eqvl{t} y.
\end{array} \right.
\end{equation}
Observe that if $x=\oracle(x,y,t)$ then $x\lsteq{t} y$; this does not necessarily imply however that $x\lst{t}y$. 

This oracle basically aims to capture the behavior of human users. A human interested in locating, \emph{e.g.}, a target picture $t$ within the database, may be able to compare other pictures with respect to their similarity to this target but  cannot associate a numerical value to this similarity. Moreover, when the pair of pictures compared are equally similar to the target, the decision made by the human may be arbitrary.

\sloppy It is important to note here that although we write $\oracle(x,y,t)$ to stress that a query always takes place with respect to some target $t$, in practice the target is hidden and only known by the oracle. Alternatively, following the ``oracle as human'' analogy, the human user has a target in mind and uses it to compare the two objects, but never discloses it until actually being presented with it.

\fussy
 Note that our oracle is weaker than one that correctly identifies the relationship $x\eqvl{t}y$ and, \emph{e.g.}, returns a special character ``='' once two such objects are proposed: to see this, observe that oracle \eqref{oracle} can be implemented by using this stronger oracle. Hence, all our results hold if we are provided with such an oracle instead.

\subsection{Demand}

We denote by $\Nn\times \Nn$ the set of all ordered pairs of objects in $\Nn$.  For $(s,t)\in \Nn\times \Nn$, we will call $s$ the \emph{source} and $t$ the \emph{target} of the ordered pair. 
We will consider a probability distribution $\lambda$ over all ordered pairs of objects in $\Nn$ which we will call the \emph{demand}.
In other words,  $\lambda$ will be a non-negative function  
such that $$\sum_{(s,t)\in\Nn\times\Nn} \lambda(s,t)=1.$$  
In general, the demand can be \emph{heterogeneous} as $\lambda(s,t)$ may vary across different sources and targets.
We refer to the marginal distributions $$\nu(s)=\sum_{t}\lambda(s,t), \quad \mu(t)=\sum_{s}\lambda(s,t),$$ as the \emph{source} and \emph{target} distributions, respectively. Moreover, will refer to the support of the target distribution 
$$ \myset{T}=\supp(\mu)=\{x\in \Nn:\text{ s.t. }\mu(x) > 0\}$$
 as the \emph{target set} of the demand. 

As we will see in Section~\ref{section:mainresults}, the target distribution $\mu$ will play an important role in our analysis. In particular, two quantities that affect the performance of searching in our scheme will be the \emph{entropy} and the \emph{doubling constant} of the target distribution. We introduce these two notions formally below.

\subsection{Entropy}\label{section:entropy}

Let $\sigma$ be a probability distribution over $\Nn$. 
The \emph{entropy} of $\sigma$ is defined as
\begin{align}H(\sigma)=\sum_{x\in \supp(\sigma)} \sigma(x)\log\frac{1}{\sigma(x)}.\label{entropy}\end{align}
We define the \emph{max-entropy} of $\sigma$ as 
\begin{align}H_{\max}(\sigma) = \max_{x\in \supp(\sigma)} \log \frac{1}{\sigma(x)} \label{maxentropy}.\end{align} 

The entropy has strong connections with the content search problem. More specifically, suppose that we have access to a  so-called \textit{membership oracle} \cite{Cover} that can answer  queries of the following form: 
\begin{quote}
``Given a target $t$ and a subset $A\subseteq \Nn$, does $t$ belong to $A$?''\end{quote}
 Assume now that an object $t$ is selected according to a distribution $\mu$. 
 It is well known that to find a target $t$ one needs to submit at least $H(\mu)$ queries, on average,  to the oracle described above (see, chap. 2, \cite{Cover}). Moreover, there exists an algorithm (Huffman coding) 
that finds the target with only $H(\mu)\!+\!1$ queries on average \cite{Cover}. In the worst case, which occurs when the target is the least frequently selected object, the algorithm requires  $H_{\max}(\mu)\!+\!1$ queries to identify $t$. 

Our work identifies similar bounds assuming that one only has access to a comparison oracle, like the one described by \eqref{oracle}. Not surprisingly, the entropy of the target distribution $H(\mu)$ shows up in the performance bounds that we obtain (Theorems~\ref{th:content-search} and \ref{lowerbound}). However, searching for an object will depend not only on the entropy of the target distribution, but also on the topology of the target set $\myset{T}$. This will be captured by the doubling constant of $\mu$, which we describe in more detail below.

\subsection{Doubling Constant} \label{sectiondoubling}
Given an object $x\in \Nn$, we denote by
\begin{align}\label{ball}
B_x(r) = \{y\in \myset{M}: d(x,y)\leq r\}
\end{align}
the closed ball of radius $r\geq 0$ around $x$.
Given a probability distribution $\sigma$ over $\Nn$ and a set $A\subset\Nn$ let $$\sigma(A)=\sum_{x\in A} \sigma(x).$$
We define the \emph{doubling constant} $c(\sigma)$ of a distribution $\sigma$ to be the minimum $c>0$ for which 
\begin{align}\label{doubling}\sigma(B_x(2r)) \leq c \cdot \sigma(B_x(r)),\end{align}
for any $x\in \supp(\sigma)$ and any $r\geq 0$.
Moreover, will say that $\sigma$ is $c$-\emph{doubling} 
if $c(\mu)=c$.

\begin{figure}[!t]
\begin{center}

\input{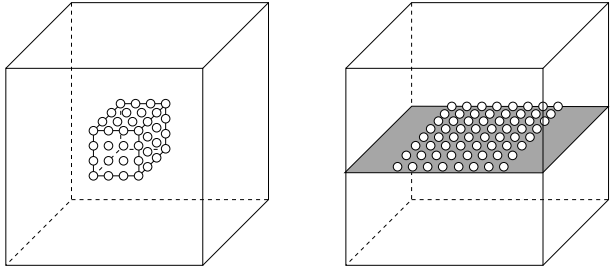}
\caption{Example of dependence of $c(\sigma)$ on the topology of the support $\supp(\sigma)$. When $\supp(\sigma)$ consists of $n=64$ objects arranged in a cube, $c(\sigma)=2^3$. If, on the other hand, these $n$ objects are placed on a plane, $c(\sigma) = 2^2$. In both cases $\sigma$ is assumed to be uniform, and $H(\sigma) = \log N$}\label{example}
\end{center}
\end{figure}

Note that, contrary to the entropy $H(\sigma)$, 
 the doubling constant $c(\sigma)$ depends on the topology of $\supp(\sigma)$, determined by the embedding of $\Nn$ in  the metric space $(\myset{M},d)$. This is illustrated in Fig.~\ref{example}. In this example, $|\Nn|=64$, and the set $\Nn$ is embedded in a 3-dimensional cube. Assume that $\sigma$ is the uniform distribution over the $N$ objects; if these objects are arranged uniformly in a cube, then  $c(\sigma)=2^3$; if however these $n$ objects are arranged uniformly in a 2-dimensional plane, $c(\sigma) = 2^2$. Note that, in contrast, the entropy  of $\sigma$ in both cases equals  $\log n$ (and so does the max-entropy).

\begin{table*}[t!]
\begin{center}
\caption{Summary of Notation}
\begin{tabular}{|l|l|l|l|}
\hline
$\myset{N}$ & Set of objects & $(\myset{M},d)$ & Metric space\\
$d(x,y)$ & Distance between $x,y\in\myset{M}$ & $x\lsteq{z}y$ & Ordering w.r.t. distance from $z$\\
$x\lsteq{z}y$ & Ordering w.r.t. distance from $z$ & $x\eqvl{z}y$ & $x$ and $y$ at same distance from $z$\\
$\lambda$ & The demand distribution & $\nu$ & The source distribution \\
$\mu$ & The target distribution & $\myset{T}$ & The target set\\
$H(\sigma)$ & The entropy of  $\sigma$ & $H_{\max}(\sigma)$ & The max-entropy of $\sigma$\\
$B_x(r)$ & The ball of radius $r$ centered at $x$ &$c(\sigma)$ & The doubling constant of $\sigma$\\
$\myset{S}$ & The set of shortcut edges &  $\myset{L}$ & The set of local edges\\
$\bar{C}_{\myset{S}}$ & Expected cost of greedy forwarding given set $\myset{S}$ & $\bar{C}_{\Ff}$ & Expected search cost of policy $\Ff$\\
\hline
\end{tabular}
\end{center}
\end{table*}

\section{Problem Statement}\label{section:problemstatement}
We now formally define the two problems that will be the main focus of this paper. The first is the problem of \emph{content search through comparisons} and the second is the \emph{small-world network design} problem.

\subsection{Content Search Through Comparisons}\label{section:contentsearch}

For the content search problem, we consider the object set $\Nn$, embedded in $(\myset{M},d)$. Although this embedding exists, 
we are constrained by not being able to directly compute object distances. Instead, we only have access to a comparison oracle, like the one defined in Section~\ref{section:oracle}.

Given access to the above oracle, we would like to navigate through $\Nn$ until we find a target object. 
In particular, we define  \emph{greedy content search} as follows. Let $t$ be the target object and $s$ some object that serves as a starting point. The greedy content search algorithm proposes an object $w$ and asks the oracle to select, between $s$ and $w$, the object  closest to the target $t$, \emph{i.e.}, it evokes $\oracle(s,w,t)$. This process is repeated until the oracle returns something other than $s$, \emph{i.e.}, the proposed object  is ``more similar'' to the target $t$. Once this happens, say at the proposal of some $w'$, if $w'\neq t$, the greedy content search repeats the same process now from $w'$. 
If at any point the proposed object is $t$, the process terminates.

Recall that in the ``oracle as a human'' analogy the human cannot reveal $t$ before actually being presented with it. We similarly assume here that $t$ is never ``revealed'' before actually being presented to the oracle. Though we write $\text{\oracle}(x,y,t)$ to stress that the submitted query is w.r.t.~proximity to $t$, the target $t$ is not a priori known. In particular, as we see below, the decision of which objects $x$ and $y$ to present to the oracle cannot directly depend on $t$.

More formally, let $x_k,y_k$ be the $k$-th pair of objects submitted to the oracle: $x_k$ is the \emph{current object}, which greedy content search is trying to improve upon, and $y_k$ is the \emph{proposed object}, submitted to the oracle for comparison with $x_k$. Let $$o_k=\oracle(x_k,y_k,t)\in \{x_k,y_k\}$$ be the oracle's response, and define  
$$\history_k=\{(x_i,y_i,o_i)\}_{i=1}^k,\quad k=1,2,\ldots$$ 
be the sequence of the first $k$ inputs given to the oracle, as well as the responses obtained; $\history_k$ is the ``history'' of the content search up to and including the $k$-th access to the oracle. 

The source object is always one of the first two objects submitted to the oracle, \emph{i.e.}, $x_1=s$. Moreover,
in greedy content search, 
$$x_{k+1}=o_k, \quad k=1,2,\ldots$$ 
\emph{i.e.}, the current object is always the closest to the target among the ones submitted so far.

On the other hand, the selection of the proposed object $y_{k+1}$ will be determined by the history $\history_k$ and the object $x_k$. In particular, given $\history_k$ and the current object $x_k$ there exists a mapping $(\history_k,x_k)\mapsto \myset{F}(\history_k,x_k)\in \Nn$ such that
\begin{align*}y_{k+1}=\myset{F}(\history_k, x_k),\quad k=0,1,\ldots,\end{align*} 
where here we take $x_0=s\in \Nn$ (the source/starting object) and $\history_0=\emptyset$ (\emph{i.e.}, before any comparison takes place, there is no history). 

We will call the mapping $\myset{F}$ the \emph{selection policy} of the greedy content search. In general, we will allow the selection policy to be randomized; in this case, the object returned by $\myset{F}(\history_k,x_k)$ will be a random variable, whose distribution 
\begin{align}\prob(\myset{F}(\history_k,x_k)=w), \quad w\in \Nn,\label{mapdistr}\end{align} is fully determined by $(\history_k,x_k)$.
Observe that $\myset{F}$  depends on the target $t$  only indirectly, through $\history_k$ and $x_k$; this is consistent with our assumption that $t$ is only ``revealed'' when it is eventually located.

We will say that a selection policy is \emph{memoryless} if it depends on $x_k$ but not on the history $\history_k$. 
In other words, the distribution \eqref{mapdistr} is the same when  $x_k=x\in \Nn$, irrespectively of the comparisons performed prior to reaching $x_k$.


Our goal is to select  $\myset{F}$ so that we minimize the number of accesses to the oracle.
In particular, given a source object $s$, a target $t$ and a selection policy $\myset{F}$, we define the search cost $$C_{\myset{F}}(s,t) = \inf\{k:y_k=t\}$$ to be the number of proposals to the oracle until $t$ is found. This is a random variable, as $\myset{F}$ is randomized; let $\expect[C_{\myset{F}}(s,t)]$ be its expectation. The Content Search Through Comparisons problem is then defined as follows:

\begin{quote}
\textsc{ Content Search Through Comparisons (CSTC)}:
Given an embedding of $\Nn$ into $(\myset{M},d)$ and a demand distribution $\lambda(s,t)$,  select $\myset{F}$ that minimizes  the expected search cost
$$\bar{C}_{\Ff}= \sum_{(s,t)\in \Nn\times\Nn}\lambda(s,t)\expect[C_{\myset{F}}(s,t)].$$
\end{quote}
Note that, as $\mathcal{F}$ is randomized, the free variable in the above optimization problem is the distribution \eqref{mapdistr}.


\subsection{Small-World Network Design}\label{section:smallworld}
In the small network design problem, we again consider the objects in $\Nn$, embedded in $(\myset{M},d)$. It is now assumed however that the objects in $\Nn$ are connected to each other. The network formed by such connections is represented by a directed graph $G(\Nn,\myset{L}\cup \myset{S})$, where $\myset{L}$ is the set of \emph{local} edges and $\myset{S}$ is the set of \emph{shortcut} edges. These edge sets are disjoint, \emph{i.e.}, $\myset{L}\cap \myset{S}=\emptyset$.

The edges in $\myset{L}$ are typically assumed to satisfy the following property:
\begin{property}\label{greedyprop}
For every pair of distinct objects $x,t\in \Nn$  there exists an object $u$
 adjacent to $x$  such that $(x,u)\in \myset{L}$  and $u\lst{t}x$.
\end{property}
In other words, for any object $x$ and a target $t$, $x$ has a local edge leading to an object closer to $t$.

Recall that in the content search problem the goal was to find $t$  (starting from source $s$) using only accesses to a comparison oracle. Here the goal is to use such an oracle to route a message from $s$ to  $t$ over the links in graph $G$. In particular,
given graph $G$, we define  \emph{greedy forwarding} \cite{kleinberg} over $G$ as follows.
Let $\Gamma(s)$ be the neighborhood of $s$, \emph{i.e.},
$$ \Gamma(s) = \{u\in \Nn\text{ s.t. }(s,u)\in \myset{L}\cup \myset{S} \}.$$
 Given a source  $s$ and a target $t$, greedy forwarding sends a message to neighbor $w$ of $s$ that is as close to $t$ as possible, \emph{i.e.},
\begin{align}w = \textstyle{\min_{\lsteq{t}}} \Gamma(s).\label{closest}\end{align} If $w\neq t$, the above process is repeated at $w$; if $w=t$, greedy forwarding terminates. 

Note that local edges, through Property~\ref{greedyprop}, guarantee that greedy forwarding from any source $s$ will eventually reach $t$: there will always be a neighbor that is closer to $t$ than the object currently having the message.  Moreover, the closest neighbour $w$ selected through \eqref{closest} can be found using a comparison oracle. In particular, if the message is at an object $x$, $|\Gamma(x)|$ queries to the oracle will suffice to find  the neighbor that is closest to the target.  

%

The edges in $\myset{L}$ are typically called ``local'' because they are usually 
 determined by object proximity.
For example, in the classical paper by Kleinberg \cite{kleinberg}, objects are arranged uniformly in a rectangular $k$-dimensional grid---with no gaps---and $d$ is taken to be the Manhattan distance on the grid. 
 Moreover, there exists an $r\geq 1$ such that any two objects at distance less than $r$ have an edge in $\myset{L}$. In other words, 
\begin{align}\myset{L}=\{(x,y)\in \Nn\times\Nn \text{ s.t. }d(x,y)\leq r\}.\label{local}\end{align}
Assuming every position in the rectangular grid is occupied, such edges indeed satisfy Property~\ref{greedyprop}.  In this work, we will not require that edges in $\myset{L}$ are given by \eqref{local} or some other locality-based definition; our only assumption  is  that they satisfy Property~\ref{greedyprop}. Nevertheless, for the sake of consistency with prior work, we  also refer to edges in $\myset{L}$ as ``local''.

The shortcut edges $\myset{S}$ need not satisfy Property~\ref{greedyprop}; our goal is to select these shortcut edges in a way so that greedy forwarding is as efficient as possible. 

In particular, we assume that we can select no more than $\beta$ shortcut edges, where $\beta$ is a positive integer. For $S$ a subset of $\Nn\times\Nn$ such that $|S|\leq \beta$, we denote by $C_S(s,t)$ the cost of greedy forwarding, in message hops, for forwarding a message from $s$ to $t$ given that $\mathcal{S}=S$. We allow the selection of shortcut edges to be random: the set $\myset{S}$ can be a random variable over all subsets $S$ of $\Nn\times\Nn$ such that $|S|\leq \beta$.

 We denote by
\begin{align}\label{Sdistr}\Pr(\myset{S}=S), \quad S\subseteq \Nn\times\Nn\text{ s.t. }|S|\leq \beta \end{align}
the distribution of $\myset{S}$.
Given a source  $s$ and a target $t$, let $$\expect[C_{\myset{S}}(s,t)]= \sum_{S \subseteq \Nn\times\Nn: |S|\leq \beta} C_{S}(s,t) \cdot \prob(\myset{S}=S)$$ be the expected cost of forwarding a message from $s$ to $t$ with greedy forwarding, in message hops. 

We  consider again a heterogeneous demand: a source and target object are selected at random from $\Nn\times \Nn$ according to a
demand probability distribution $\lambda$. 
 The \emph{small-world network design problem} can then be formulated as follows.

\begin{quote}
\textsc{Small-World Network Design (SWND)}:
Given an embedding of $\Nn$ into $(\myset{M},d)$, a set of local edges $\myset{L}$, a demand distribution $\lambda$, and an integer $\beta>0$, select a r.v. $\myset{S}\subset \Nn\times\Nn$ that minimizes 
$$\bar{C}_{\myset{S}}= \sum_{(s,t)\in \Nn\times\Nn}\lambda(s,t)\expect[C_{\myset{S}}(s,t)]$$ subject to $|\myset{S}|\leq\beta$.
\end{quote}
In other words, we wish to select $\myset{S}$ so that the cost of greedy forwarding is minimized. Note that, since $\myset{S}$ is a random variable, the free variable of the above optimization problem is essentially the distribution of $\myset{S}$, given by \eqref{Sdistr}. 

\subsection{Relationship Between \textsc{SWND} and \textsc{CSTC} }\label{section:relationship}
In what follows, we  try to give some intuition about how SWND and CSTC are related and why the upper bounds we obtain for these two problems are identical, without resorting to the technical details appearing in our proofs.

Consider the following version of the SWND problem, in which we place three additional restrictions to the selection of the shortcut edges. 
First, $|\myset{S}|=n$, \emph{i.e.}, we can only select $n=|\Nn|$ shortcut edges.  Second, for every $x\in \Nn$,  there exists exactly one directed edge $(x,y)\in \myset{S}$: each object has exactly one out-going edge incident to it.  Third, the object $y$ to which object $x$ connects to is selected independently at each $x$, according to a probability distribution $\ell_x(y)$. In other words, for $\Nn=\{x_1,x_2,\ldots,x_n\}$, the joint distribution of shortcut edges has the form:
\begin{align}\Pr(\myset{S} = \{(x_1,y_1),\ldots(x_n,y_n)\}) = \prod_{i=1}^{n}\ell_{x_i}(y_i). \label{restricted}\end{align}
We  call this version of the \textsc{SWND} problem the \emph{one edge per object} version, and denote it by \textsc{1-SWND}. Note that, in \textsc{1-SWND}, the free variables are the distributions $\ell_x$, $x\in \Nn$, which are to be selected in order to minimize the average cost $\bar{C}_{\myset{S}}$.

\begin{figure}[t!]
\begin{center}
\hspace{.5cm}
\hspace*{-0.5cm}\input{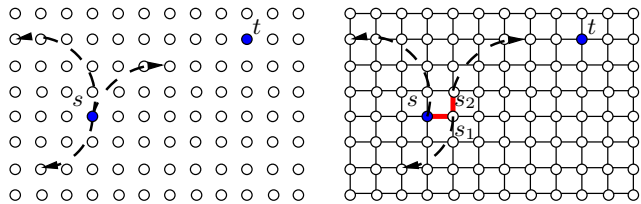}
\caption{An illustration of the relationship between 1-SWND and CTSC. In CTSC, the source $s$ samples objects independently from the same distribution until it locates an object closest to the target $t$. In 1-SWND, the re-sampling is emulated by the movement to new neighbors. Each neighbor ``samples'' a new object independently, from a slightly perturbed distribution, until one closest to the target $t$ is found.}\label{cost-diff}
\end{center}
\end{figure} 
Consider now following content selection policy for \textsc{CTSC}:
$$\Pr(\myset{F}(x_k)=w) = \ell_{x_k}(w), \quad\text{ for all }w\in \Nn$$
In other words, if the proposed object at $x_k$ is sampled according to the same distribution as the shortcut edge in \text{1-SWND}. This selection policy is memoryless as it does not depend on the history $\history_k$ of objects presented to the oracle so far.

A parallel between these two problems can be drawn as follows. Suppose that the same source/target pair $(s,t)$ is given in both problems. In content search, while starting from node $s$, the memoryless selection policy  draws independent samples from distribution $\ell_s$ until an object closer to the target than $s$ is found. 

In contrast, greedy forwarding in 1-SWND can be described as follows. Since shorcut edges are generated independently, we can assume that they are generated while the message is being forwarded. Then, greedy forwarding at the source object can be seen as sampling an object from distribution $\ell_s$, namely, the one incident to its shortcut edge. If this object is not closer to the target than $s$, the message is forwarded to a neigboring node $s_1$ over a local edge of $s$. Node $s_1$  then samples independently a node from distribution $\ell_{s_1}$ this time---the one incident to its shorcut edge.

Suppose that the distributions $\ell_x$ vary only slightly across neighboring nodes. Then, forwarding over local edges corresponds to the independent resampling occuring in the content search problem. Each move to a new neighbor samples a new object (the one incident to its shortcut edge) independently of previous objects but from a slightly perturbed distribution. This is repeated until an object closer to the target $t$ is found, at which point the message  moves to a new neighborhood over the shortcut edge. 


Effectively,  re-sampling  is ``emulated'' in 1-SWND  by the movement to new neighbors. 
This is, of course, an informal argument; we refer the interested reader to the proofs of Theorems~\ref{th:small-world} and Theorem~\ref{th:content-search} for a  rigorous statement of the relationship between the two problems.

\section{Main Results}\label{section:mainresults}
We now present our main results with respect to \textsc{SWND} and \textsc{CSTC}.
 Our first result is negative:  optimizing greedy forwarding  is a hard problem.  
\begin{theorem}\label{hardnessthm}
SWND is NP-hard.
\end{theorem}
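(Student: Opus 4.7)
The plan is to give a polynomial-time reduction from the Set Cover problem, which is NP-hard. Because the expected cost $\bar{C}_{\myset{S}}$ is linear in the distribution of $\myset{S}$, there is always an optimal \emph{deterministic} choice of at most $\beta$ shortcut edges; hence it suffices to prove hardness of the deterministic version of SWND.

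Given a Set Cover instance with universe $U=\{u_1,\ldots,u_n\}$, a family $\myset{C}=\{S_1,\ldots,S_m\}$ of subsets of $U$, and an integer $k$, I would construct an SWND instance as follows. The object set $\Nn$ consists of a distinguished source $s$, a target object $t_i$ for each $u_i\in U$, a ``hub'' object $v_j$ for each $S_j\in\myset{C}$, and a chain of auxiliary objects $a_1,\ldots,a_N$ used to connect $s$ to the targets via local edges. The metric $d$ is realized as the shortest-path metric on a weighted graph in which edges are chosen so that: $d(v_j,t_i)$ is small iff $u_i\in S_j$ and large otherwise; the source $s$ is farther from every $t_i$ than any hub $v_j$ covering $t_i$; and the $a_l$'s are placed along a chain in which distances to each $t_i$ decrease monotonically as we move toward $t_i$. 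Using a shortest-path metric guarantees the triangle inequality for free. The local edge set $\myset{L}$ consists of the edges of this chain (plus short edges around each target), tuned so that Property~\ref{greedyprop} holds, while greedy forwarding from $s$ to any $t_i$ using $\myset{L}$ alone traverses $\Omega(n)$ objects. Finally, take $\lambda$ uniform on $\{(s,t_i):1\le i\le n\}$ and $\beta=k$.

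The correctness argument has two directions. If $\myset{C}$ admits a subcover $\{S_{j_1},\ldots,S_{j_k}\}$, then adding the shortcuts $\myset{S}=\{(s,v_{j_1}),\ldots,(s,v_{j_k})\}$ ensures that for each target $t_i$ some hub $v_{j_l}$ with $u_i\in S_{j_l}$ satisfies $v_{j_l}\lst{t_i}s$, so greedy forwarding at $s$ takes a shortcut, and then from $v_{j_l}$ reaches $t_i$ in $O(1)$ hops; the average cost is therefore below a small threshold $\tau$. Conversely, if no size-$k$ cover exists, then for any $\myset{S}$ with $|\myset{S}|\le k$ some target $t_i$ has no ``useful'' shortcut available at $s$ (nor reachable cheaply from any intermediate point on the chain), so greedy forwarding from $s$ to $t_i$ must traverse a linear portion of the chain, pushing $\bar{C}_{\myset{S}}$ above $\tau$.

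The main obstacle is the quantitative design of the metric and the local graph so that four properties simultaneously hold: (a) the triangle inequality is satisfied, (b) Property~\ref{greedyprop} is satisfied by $\myset{L}$, (c) greedy forwarding provably takes shortcut $(s,v_j)$ precisely for those targets $t_i$ with $u_i\in S_j$ and otherwise ignores it, and (d) no alternative shortcut placement (between hubs, between chain vertices, or out of a non-source object) can beat the cover-based placement. Points (c) and (d) are the most delicate: one must ensure via the distance weights that a shortcut out of an object $a_l$ deep in the chain reduces cost only for targets ``beyond'' $a_l$ and by at most a constant factor, so it cannot substitute for a missing covering hub at $s$. Selecting the auxiliary chain length $N$ sufficiently larger than $mn$ and choosing the cost threshold $\tau$ strictly between the ``all-targets-covered'' and ``one-target-uncovered'' cases makes the reduction go through.
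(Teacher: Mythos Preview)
Your high-level plan (reduce from an NP-hard covering problem to the deterministic version of \textsc{SWND}) is sound, and your observation that linearity of $\bar{C}_{\myset{S}}$ in the distribution of $\myset{S}$ lets you pass to a deterministic optimum is correct and matches the paper. However, the construction as described has a genuine gap.

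The issue is Property~\ref{greedyprop}. The paper requires that for \emph{every} pair of distinct objects $x,t\in\Nn$ there is a local edge from $x$ to something strictly closer to $t$. In your construction the local edges are only ``the edges of this chain (plus short edges around each target).'' But your hubs $v_j$ are objects in $\Nn$ too, and they must have outgoing local edges toward every possible target---including targets $t_i$ with $u_i\notin S_j$, and including other hubs and chain vertices. You never say what these local edges are. If you attach each hub to the chain by a local edge to make Property~\ref{greedyprop} hold, then greedy forwarding can reach hubs via local edges, potentially undermining the argument that shortcuts to hubs are needed; if you do not, Property~\ref{greedyprop} fails. Resolving this forces you to specify the full metric and local graph much more carefully than you have, and your point (d)---ruling out alternative shortcut placements such as a single shortcut from $s$ to a point deep in the chain that helps \emph{all} targets at once---depends entirely on those unspecified details.

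For comparison, the paper takes a different route that sidesteps these difficulties. It reduces from \textsc{DominatingSet} rather than \textsc{Set Cover}, and embeds everything in a two-dimensional grid with the Manhattan metric, taking $\myset{L}$ to be all unit-distance grid edges. This makes Property~\ref{greedyprop} hold trivially and makes every distance explicit. The gadget uses four columns of designated vertices $a_i,b_i,c_i,d_i$ and three demand classes with carefully chosen weights; a large weight on one class forces those pairs to receive direct shortcuts, and the remaining budget together with a third demand class forces the leftover shortcuts to encode a dominating set. A side benefit of this concrete embedding is that the hardness holds already in Kleinberg's original grid setting, which your shortest-path-metric construction would not yield even if completed.
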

The proof of this theorem can be found in Section \ref{proof:hardness}. 
In short, the proof  reduces  \textsc{DominatingSet} to the decision version of \textsc{SWND}. Interestingly, the reduction is to a $\textsc{SWND}$ instance in which (a) the metric space is a 2-dimensional grid, (b) the distance metric is the Manhattan distance on the grid and (c) the local edges are given by \eqref{local}. Thus, \textsc{SWND} remains NP-hard even in the original setup considered by Kleinberg~\cite{kleinberg}.

The NP-hardness of \textsc{SWND} suggests that this problem cannot be solved in its full generality. Motivated by this, as well as its relationship to content search through comparisons,  we consider below the restricted version \textsc{1-SWND}. In particular, we provide a distribution of edges for \textsc{1-SWND} for which an upper-bound of search cost exists. This upper-bound can be expressed in terms of the entropy and the doubling dimension of the target distribution $\mu$. Through the relationship of \textsc{1-SWND} with \textsc{CSTC}, we are able to obtain a greedy  content search strategy whose cost can also be bounded the same way.



For a given demand $\lambda$, recall that $\mu$ is the marginal distribution of the demand $\lambda$ over the target set $\myset{T}$, and that 
for $A\subset \Nn$,  $\mu(A)=\sum_{x\in A} \mu(x)$. 
Then, for any two objects $x,y \in \Nn$, we define the \textit{rank} of object $y$ w.r.t.~object $x$ as follows:
\begin{equation}\label{rank}
r_x(y)\equiv\mu(B_x(d(x,y)))
\end{equation}
where $B_x(r)$ is the closed ball with radius $r$ centered at $x$.


Suppose now that shortcut edges are generated according to the joint distribution \eqref{restricted}, where 
the  outgoing  link from an object $x\in \Nn$ is selected according to the following probability:
\begin{align}\label{shortcutdistr}
\ell_x(y)\propto \frac{\mu(y)}{r_x(y)},
\end{align}
for $y\in \supp(\mu)$, while  for $y\notin \supp(\mu)$ we define $\ell_x(y)$ to be zero. 
Eq.~\eqref{shortcutdistr} implies the following appealing properties. 
\begin{itemize}
\item For two objects $y,z$ that have the same distance from $x$, if $\mu(y)>\mu(z)$  then $\ell_x(y)>\ell_x(z)$, \emph{i.e.}, y has a higher probability of being connected to $x$.
\item When two objects $y,z$ are equally likely to be targets, if $y\lst{x} z$ then $\ell_x(y)>\ell_x(z)$.
\end{itemize}
The distribution \eqref{shortcutdistr} thus biases both towards objects close to $x$ as well as towards objects that are likely to be targets. 
Finally, if the metric space $(\myset{M},d)$ is a $k$-dimensional grid and the targets are uniformly distributed over $\Nn$ then $\ell_x(y)\propto (d(x,y))^{-k}.$ 
This is the shortcut distribution used by \cite{kleinberg}; Eq~\eqref{shortcutdistr} is thus  a generalization of this distribution to heterogeneous targets as well as to more general metric spaces. 

 Our next theorem, whose proof is in Section~\ref{proof:small-world},
 relates the cost of greedy forwarding under \eqref{shortcutdistr}  to the entropy $H$, the max-entropy $H_{\max}$ and the doubling parameter  $c$ of the target distribution $\mu$.
\begin{theorem}\label{th:small-world}
Given a demand $\lambda$, consider the set of shortcut edges $\myset{S}$ sampled according to \eqref{restricted}, where $\ell_x(y)$, $x,y\in \Nn$, are given by \eqref{shortcutdistr}. Then 
$$\bar{C}_{\Ss}\leq 6 c^3(\mu) \cdot H(\mu)\cdot  H_{\max}(\mu).$$
\end{theorem}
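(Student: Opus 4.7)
The strategy is to adapt Kleinberg's classical analysis of greedy routing in small worlds to the heterogeneous, doubling-distribution setting. I will decompose the trajectory of greedy forwarding into \emph{phases} indexed by the $\mu$-mass of the ball centered at $t$ containing the current node, show that each phase is exited in $O(c^{O(1)}\,H_{\max}(\mu))$ hops in expectation, and finally average over the demand $\lambda$ so that the number of phases---which depends on $t$ only through $\log(1/\mu(t))$---contributes $H(\mu)$ rather than $H_{\max}(\mu)$.

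\textbf{Step 1: bound the normalization.} Let $Z_x = \sum_{y\in \supp(\mu)} \mu(y)/r_x(y)$, so that $\ell_x(y) = \mu(y)/(Z_x\,r_x(y))$. Partition $\supp(\mu)\setminus\{x\}$ into dyadic annuli $A_k = \{y : 2^{k-1} < d(x,y) \le 2^k\}$. For each $y\in A_k$, $B_x(d(x,y))\supseteq B_x(2^{k-1})$, so $r_x(y)\ge\mu(B_x(2^{k-1}))$; meanwhile $\sum_{y\in A_k}\mu(y)\le \mu(B_x(2^k))\le c(\mu)\,\mu(B_x(2^{k-1}))$ by the doubling property. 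Thus each nonempty annulus contributes at most $c(\mu)$ to $Z_x$. Since only $O(H_{\max}(\mu))$ values of $k$ correspond to nonzero mass (the smallest contributing annulus has mass at least $\min_y \mu(y) \ge 2^{-H_{\max}(\mu)}$), one obtains $Z_x = O\!\bigl(c(\mu)\,H_{\max}(\mu)\bigr)$ uniformly in $x$.

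\textbf{Step 2: phase exit probability.} Fix a current position $x$ and target $t$, and set $m=\mu(B_t(d(x,t)))$. Call a shortcut endpoint $y$ ``good'' if $y\in B_t(R)$ for an appropriately chosen radius $R$ with $\mu(B_t(R))\le m/2$, so that jumping to $y$ halves the target ball mass. Any such $y$ satisfies $d(x,y)\le d(x,t)+R < 2\,d(x,t)$, hence $r_x(y)\le \mu(B_x(2d(x,t)))$. Using $B_x(2d(x,t))\subseteq B_t(3d(x,t))$ and applying the doubling property at $t$ twice yields $r_x(y)\le c(\mu)^2\,m$. Summing $\ell_x(y)$ over good $y$ gives an exit probability of order $\mu(B_t(R))/(c(\mu)^2\,m\,Z_x)$; since $R$ can be chosen so that $\mu(B_t(R))$ is within a constant factor of $m/2$, the per-hop exit probability is at least $\Omega(1/(c(\mu)^2 Z_x))$, and the expected number of hops spent in a phase is $O(c(\mu)^2 Z_x)=O(c(\mu)^3 H_{\max}(\mu))$.

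\textbf{Step 3: phase counting and averaging.} Property~\ref{greedyprop} guarantees that at every step greedy forwarding can follow a local edge to a strictly closer neighbor, so the sequence $d(x_k,t)$---and therefore $\mu(B_t(d(x_k,t)))$---is nonincreasing, making the phase index monotone. The number of phases before reaching $t$ is at most $\lceil\log_2(1/\mu(t))\rceil\le H_{\max}(\mu)$. Combining with Step~2, $\E[C_{\Ss}(s,t)]=O(c(\mu)^3\,H_{\max}(\mu)\,\log(1/\mu(t)))$, uniformly in $s$. Averaging against $\lambda$ and using that the marginal of $t$ under $\lambda$ is $\mu$:
\[
\bar{C}_{\Ss}=\sum_{(s,t)}\lambda(s,t)\,\E[C_{\Ss}(s,t)]\le O\!\bigl(c(\mu)^3\,H_{\max}(\mu)\bigr)\,\sum_t \mu(t)\log\frac{1}{\mu(t)}=O\!\bigl(c(\mu)^3\,H(\mu)\,H_{\max}(\mu)\bigr).
\]
Tightening the implicit constant to $6$ amounts to careful bookkeeping of the annulus sum, the choice of exit radius $R$, and keeping each doubling exponent at its minimum.

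\textbf{Main obstacle.} The most delicate point is Step~2: the mass $\mu(B_t(R))$ is piecewise constant in $R$ and may jump past the $m/2$ threshold, so one cannot always find $R$ with $\mu(B_t(R))\in[m/4,m/2]$; handling this requires either enlarging the exit ball and tolerating a weaker per-phase reduction, or absorbing the discrepancy into a constant. A second subtlety is that the doubling property is assumed only for $x\in\supp(\mu)$, whereas greedy forwarding may pass through nodes outside $\supp(\mu)$; Step~1 must therefore be complemented by an argument showing that either such off-support nodes still satisfy a usable doubling-type inequality or that their contribution to the expected cost can be absorbed without loss.
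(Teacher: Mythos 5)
Your proposal follows the same architecture as the paper's proof: phases indexed by dyadic ranges of $r_t(\cdot)=\mu(B_t(d(\cdot,t)))$, a per-phase geometric bound obtained from the shortcut distribution, $H_{\max}(\mu)$ entering through the normalization $Z_x$, and $H(\mu)$ from averaging the $\log(1/\mu(t))$ phases against the target marginal. Steps 2 and 3 are structurally sound. There are, however, two concrete gaps that prevent the bookkeeping from delivering the stated $6\,c^3(\mu)H(\mu)H_{\max}(\mu)$. In Step 1, the claim that only $O(H_{\max}(\mu))$ annuli carry mass does not follow from $\min_y\mu(y)\ge 2^{-H_{\max}(\mu)}$: disjointness only bounds the number of nonempty annuli by $2^{H_{\max}(\mu)}$, and indeed $n$ points at distances $2,4,\dots,2^n$ from $x$ under the uniform distribution give $n$ nonempty annuli while $H_{\max}=\log n$. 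The annulus argument also invokes doubling at $x$, which is assumed only for $x\in\supp(\mu)$ and is vacuous when $\mu(B_x(2^{k-1}))=0$. Even after repairing it by telescoping ($\sum_k(m_k/m_{k-1}-1)$ with $m_k=\mu(B_x(2^k))$, $\prod_k m_k/m_{k-1}\le 1/\mu(x)$ and each ratio at most $c$), the annulus route yields only $Z_x=O\bigl(\tfrac{c-1}{\log c}\,H_{\max}\bigr)$, i.e.\ an extra factor of order $c(\mu)$. The paper's Lemma~\ref{normalization} avoids annuli altogether: it sorts $\myset{T}$ by distance from $x$ and compares $\sum_i \mu_i/Q_i$ to an integral of $1/r$, giving $Z_x\le 1+\ln(1/\mu(x^*))\le 3H_{\max}(\mu)$ with no dependence on $c(\mu)$ and no use of doubling. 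That sharper bound is needed.

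The second gap is the radius selection in Step 2, which you correctly flag but misprice: the discrepancy is a factor of $c(\mu)$, not a universal constant. Taking $\rho$ minimal with $\mu_t(\rho)\ge m/2$ and setting $R=\rho/2$, minimality gives $\mu_t(\rho/2)<m/2$ while doubling at $t$ gives $\mu_t(\rho/2)\ge \mu_t(\rho)/c\ge m/(2c)$; the good set therefore captures only a $1/(2c)$ fraction of $m$, and the exit probability is $1/(2c^3 Z_x)$ rather than $\Omega(1/(c^2Z_x))$. This is precisely where the third power of $c$ in the theorem originates. As written, your exponent of three comes out right only because the spurious factor of $c$ in your $Z_x$ compensates for the missing factor of $c$ here; repairing Step 2 correctly while keeping your annulus-based Step 1 would give a per-phase cost of order $c^4 H_{\max}$. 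Substituting the paper's bound on $Z_x$ for your Step 1 and the minimal-$\rho$ construction into Step 2, the remainder of your argument (monotonicity of the phase index via Property~\ref{greedyprop}, geometric domination from the independence of shortcut edges across distinct nodes, and the final averaging producing $H(\mu)$) goes through and yields the stated constant.
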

Note that the bound in Theorem~\ref{th:small-world} depends  on $\lambda$ only through the target distribution $\mu$. In particular, it holds for \emph{any} source distribution $\nu$, and \emph{does not require} that sources are selected independently of the targets $t$. 
Moreover, if $\Nn$ is a $k$-dimensional grid and $\mu$ is the uniform distribution over $\Nn$, the above bound becomes $O(\log^2n)$, retrieving thus the result of~\cite{kleinberg}.

\begin{algorithm}[t]
\caption{Memoryless Content Search}
\label{algo:memoryless}
\begin{algorithmic}[1]
\REQUIRE{\oracle($\cdot$,$\cdot$,$t$) , demand distribution $\mu$, starting object $s$.}
\ENSURE{target $t$.}
\STATE $x \gets s$ 
\WHILE{$x\neq t$}
\STATE Sample $y\in \Nn$  from the probability distribution 
\begin{displaymath}
\prob(\myset{F}(\history_k,x_k)=y)=\ell_{x_k}(y).
\end{displaymath}
\STATE $x\gets \oracle(x,y,t)$.
\ENDWHILE
\end{algorithmic}
\end{algorithm}
Exploiting an underlying relationship between \textsc{1-SWND} and \textsc{CSTC}, we can obtain an efficient selection policy for greedy content search. In particular, 
\begin{theorem} \label{th:content-search} Given a demand $\lambda$, consider the memoryless selection policy $\myset{F}$ outlined in Algorithm~\ref{algo:memoryless}. Then 
$$\bar{C}_{\Ff}\leq 6 c^3(\mu) \cdot H(\mu)\cdot  H_{\max}(\mu).$$
\end{theorem}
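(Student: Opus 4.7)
The plan is to analyze Algorithm~\ref{algo:memoryless} directly by decomposing its cost into phases that play the role of hops in the greedy-forwarding analysis underlying Theorem~\ref{th:small-world}. A \emph{phase} begins each time the current object $x_k$ is strictly improved and ends at the next strict improvement. Within the $i$-th phase, the current object is some fixed $x$ and the algorithm draws i.i.d.\ proposals from $\ell_x(\cdot)$ until one is strictly closer to $t$. Writing $C_{\Ff}(s,t) = \sum_{i=1}^{N} L_i$ with $N$ the number of phases and $L_i$ their lengths, I would bound $\expect[N]$ and $\expect[L_i]$ separately.

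For the expected phase length, fix the current object $x$ and the target $t$, and set $r = d(x,t)$. The per-sample success probability is $P_x = \sum_{y \lst{t} x} \ell_x(y) \geq \sum_{y \in B_t(r/2)} \mu(y)/(r_x(y)\, Z_x)$, where $Z_x = \sum_w \mu(w)/r_x(w)$ is the normalizing constant of $\ell_x$. For $y \in B_t(r/2)$ the triangle inequality gives $y \in B_x(3r/2) \subseteq B_x(2r)$, hence $r_x(y) \leq \mu(B_x(2r))$. Moreover $B_x(2r) \subseteq B_t(3r) \subseteq B_t(4r)$, so three applications of the doubling property of $\mu$ centered at $t \in \supp(\mu)$ give $\mu(B_x(2r)) \leq c^3(\mu)\, \mu(B_t(r/2))$. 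Combining these yields $P_x \geq 1/(c^3(\mu)\, Z_x)$, so $\expect[L_i \mid x] \leq c^3(\mu)\, Z_x$. A standard harmonic-sum comparison --- ordering targets by increasing distance from $x$, setting $S_k = \mu(B_x(d(x,y_{(k)})))$ and using $(S_k - S_{k-1})/S_k \leq \log(S_k/S_{k-1})$ --- then gives $Z_x \leq 1 + H_{\max}(\mu)$.

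For the number of phases, conditional on success the new current object $y$ is distributed as $\ell_x$ restricted to $\{y : y \lst{t} x\}$. Using the same ball-and-doubling estimates, I would show that with probability bounded away from zero the object $y$ lies well inside $B_t(r)$, so the rank $r_t(x_k) = \mu(B_t(d(x_k,t)))$ decreases by a constant multiplicative factor per phase. A Wald-type identity then bounds the expected number of phases needed to bring $r_t(x_k)$ from its initial value down to the terminal level $\mu(t)$ by $O(\log(1/\mu(t)))$; averaging over targets drawn from $\mu$ yields $\expect[N] = O(H(\mu))$. Multiplying the two bounds and tracking constants produces $\bar{C}_{\Ff} \leq 6 c^3(\mu)\, H(\mu)\, H_{\max}(\mu)$.

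The main obstacle is the per-phase rank-decrease argument: the conditional law of a successful proposal is not uniform over $\{y \lst{t} x\}$, so one must exploit the precise $\mu(y)/r_x(y)$ form of $\ell_x$ together with doubling to show that a successful proposal typically lands well inside $B_t(r)$ rather than hugging its boundary. This mirrors the hop-decrement argument underlying Theorem~\ref{th:small-world}; an alternative route is to couple the CSTC resampling process to a synthetic 1-SWND execution and invoke Theorem~\ref{th:small-world} as a black box, but I would expect the direct phase analysis sketched above to give sharper constants.
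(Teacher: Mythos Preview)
Your phase decomposition is the wrong one, and this is exactly where the gap you flag as the ``main obstacle'' becomes fatal. You cut the process at every \emph{strict improvement}; the paper instead cuts it at \emph{rank levels}: phase $j$ is the set of queries during which the current object $v$ satisfies $2^{j}\mu(t)\le r_t(v)\le 2^{j+1}\mu(t)$. With that decomposition the number of phases is deterministically at most $\log(1/\mu(t))$, and the only thing one needs is a uniform lower bound on the per-query probability of landing in the rank-halving set $I=\{w:r_t(w)\le r_t(v)/2\}$. The paper obtains this via an auxiliary radius $\rho$ (the smallest with $\mu(B_t(\rho))\ge r_t(v)/2$) and the doubling inequality, yielding $\Pr(\text{proposal}\in I)\ge 1/(6c^3(\mu)H_{\max}(\mu))$; each $X_j$ is then dominated by a geometric random variable, and summing and averaging over $t\sim\mu$ gives the bound directly.

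Your route runs into two concrete problems. First, your lower bound on $P_x$ goes through $B_t(r/2)$, which guarantees \emph{distance} halving, not \emph{rank} halving: if most of the $\mu$-mass near $t$ sits inside $B_t(r/2)$, then $r_t(y)$ can be essentially equal to $r_t(x)$ even when $d(y,t)\le r/2$, so ``lies well inside $B_t(r)$'' does not imply ``rank drops by a constant factor.'' This is precisely why the paper works with the rank-based set $I$ and the radius $\rho$ rather than with $B_t(r/2)$. Second, even if you patch the conditional rank-decrease argument, factoring $\mathbb{E}[C]$ as (expected number of improvements) $\times$ (expected queries per improvement) double-counts: each factor is separately bounded by $O(c^3(\mu)H_{\max}(\mu))$ and you end up with a bound that is quadratic in $c^3(\mu)H_{\max}(\mu)$, not linear. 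The paper's rank-level decomposition avoids this because small improvements that stay within a phase are simply absorbed into that phase's geometric count. Your estimates for $Z_x$ and for the ball inclusions are fine and match Lemma~\ref{normalization}; the fix is to keep those ingredients but replace your improvement-based phases by the dyadic rank phases.
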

The proof of this theorem is given in Section~\ref{proof:content-search}.
 Like Theorem~\ref{th:small-world}, Theorem~\ref{th:content-search}  characterises the search cost in terms of the doubling constant, the entropy and the max-entropy of $\mu$. This is very appealing, given (a) the relationship between $c(\mu)$ and the topology of the target set and (b) the classic result regarding the entropy and accesses to a membership oracle, as outlined in Section~\ref{section:definitions}. 

The distributions $\ell_x$ are defined in terms of the  embedding of $\myset{N}$ in $(\myset{M},d)$ and the target distribution $\mu$.  Interestingly, however, the bounds of Theorem~\ref{th:content-search} can be achieved if \emph{neither} the  embedding in $(\myset{M},d)$ \emph{nor} the target distribution $\mu$ are a priori known. In our technical report~\cite{techrep} we propose an adaptive algorithm that asymptotically achieves the performance guarantees of  Theorem~\ref{th:content-search} only through access to a comparison oracle.  In short, the algorithm learns the ranks $r_x(y)$ and the target distribution $\mu$ as searches through comparisons take place.

A question arising from Theorems~\ref{th:small-world} and \ref{th:content-search} is how tight these bounds are. Intuitively, we expect that the optimal shortcut set $\myset{S}$ and the optimal selection policy $\myset{F}$ depend both on the entropy of the target distribution and on its doubling constant. 
Our next theorem, whose proof is in Section~\ref{proofoflowerbound}, 
establishes that this is the case for $\myset{F}$.
\begin{theorem}\label{lowerbound}
For any integer $K$ and $D$, there exists a metric space $(\myset{M},d)$ and a target measure $\mu$ with entropy $H(\mu)=K\log(D)$ and doubling constant $c(\mu)=D$ such that the  average search cost of any selection policy $\myset{F}$ satisfies
\begin{equation}\label{lower}
\bar{C}_{\myset{F}}\ge H(\mu)\frac{c(\mu)-1}{2\log(c(\mu))}\cdot
\end{equation}
\end{theorem}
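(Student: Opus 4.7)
The plan is to exhibit a hierarchical ultrametric and then reduce via Yao's principle to deterministic policies on which a recursive averaging argument can be run. For the construction, let $\myset N$ consist of the $D^K$ leaves of a complete $D$-ary tree of depth $K$, equipped with the tree ultrametric $d(u,v)=2^{h(u,v)}$ for $u\neq v$ (and $d(u,u)=0$), where $h(u,v)$ denotes the height of the nearest common ancestor. Take $\mu$ uniform on $\myset N$ and concentrate the source distribution on a single leaf $s_0$. A direct computation gives $H(\mu)=K\log D$; since $B_x(r)$ coincides with the height-$h$ subtree of size $D^h$ whenever $r\in[2^h,2^{h+1})$, doubling the radius multiplies the measure by exactly $D$, so $c(\mu)=D$. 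Therefore the right-hand side of \eqref{lower} simplifies to $K(D-1)/2$.

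By Yao's minimax principle it suffices to bound $\bar C_{\myset F}$ for deterministic policies averaged over $t\sim\mu$, and for a lower bound I will assume the comparison oracle resolves every tie adversarially by returning the current object $x_k$. Under this convention the current changes only when $d(y_k,t)<d(x_k,t)$, which in the ultrametric is equivalent to $y_k$ lying in the height-$(h-1)$ subtree of $t$ where $h=h(x_k,t)$. Thus a ``progress'' step from a state at NCA height $h$ requires the algorithm to correctly guess which of the $D-1$ sibling height-$(h-1)$ subtrees inside the height-$h$ subtree of $x_k$ (other than the one containing $x_k$ itself) contains $t$.

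The key technical ingredient is a uniformity lemma: whenever the algorithm's current first attains NCA height exactly $h$ with $t$, the posterior on $t$ conditional on the whole history is uniform over the $D-1$ candidate sibling subtrees. I would prove this by induction on the number of progress queries, exploiting the subgroup of tree automorphisms that permutes siblings at each node, which preserves $\mu$ together with the adversarial tie-breaking rule. Given this lemma, for any deterministic probing order the rank of the true target-containing subtree among the $D-1$ candidates is uniform on $\{1,\dots,D-1\}$, so the expected number of queries needed to leave level $h$ is at least $D/2$; wasted probes inside the source-side subtree at height $h-1$ or outside the height-$h$ subtree of $x_k$ can only inflate this count.

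To finish, let $f(K)$ denote the infimum over deterministic policies of $\bar C_{\myset F}$ on the depth-$K$ instance. Conditioning on the top-level position of $t$ yields $f(K)\ge f(K-1)+(D-1)/2-(D-1)/D^K$: with probability $1/D$ the target lies in the source's top subtree, contributing at least $f(K-1)$ via the inherited depth-$(K-1)$ sub-instance, while with probability $(D-1)/D$ it lies in one of the $D-1$ other top subtrees, contributing at least $D/2$ top-level queries by the uniformity lemma plus a depth-$(K-1)$ sub-instance whose expected cost is $f(K-1)-1/D^{K-1}$ (the deduction arising because the successful top-level probe may itself already hit $t$). Iterating from $f(0)=1$ telescopes to $f(K)\ge K(D-1)/2+D^{-K}\ge K(D-1)/2$, proving the theorem. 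The main obstacle is ruling out that a sophisticated algorithm could skip levels or amortize wasted queries across branches, and this is precisely where the uniformity-of-posterior lemma is essential: it leverages the tree's full symmetry to forbid any such savings.
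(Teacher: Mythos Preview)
Your construction (the depth-$K$ complete $D$-ary ultrametric with uniform $\mu$) and the computation $H(\mu)=K\log D$, $c(\mu)=D$ match the paper exactly, so the task does reduce to showing the optimal expected search cost is at least $K(D-1)/2$. However, your recursive argument has a gap that causes the bound to fall short.

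The issue is the correction term in case~2 of your recursion. You claim that after the successful top-level probe lands the current at $y_T\in B$, the residual expected cost is at least $f(K-1)-D^{-(K-1)}$. But once you condition on $t\in B$ and on the history up to time $T$, the residual cost is $0$ with probability $D^{-(K-1)}$ (namely when $t=y_T$) and otherwise is at least the optimal cost $f^{-}(K-1)$ for a target uniform on $B\setminus\{y_T\}$. The best lower bound one can extract on $(1-D^{-(K-1)})f^{-}(K-1)$ in terms of $f(K-1)$ is $f(K-1)-1$, obtained by prepending the query $y_T$ to the residual algorithm; this is already tight at depth one, since $f(1)=(D+1)/2$, $f^{-}(1)=D/2$, and $(1-1/D)\cdot D/2=(D-1)/2=f(1)-1$. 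With the corrected term your recursion becomes
\[
f(K)\ \ge\ f(K-1)+\frac{D-1}{2}-\frac{D-1}{D},
\]
which telescopes only to $1+K(D-1)(D-2)/(2D)$, strictly below $K(D-1)/2$ for every $K\ge 2$. The difficulty is intrinsic to the level-by-level split: the single query that enters $B$ is counted among the $D/2$ top-level probes but cannot simultaneously be credited as the first query of an independent depth-$(K-1)$ instance, and that lost unit accumulates across all $K$ levels. Your symmetry lemma and the $D/2$ per-level estimate are fine; it is the recursive bookkeeping that leaks.

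The paper sidesteps this entirely with a different device: it \emph{strengthens} the oracle so that each answer also reveals the current distance to $t$. Under this enhanced oracle the searcher always knows its current phase, so the optimal strategy is transparent (cycle through the $D-1$ sibling subtrees), each visited phase costs an independent $\mathrm{Unif}\{1,\dots,D-1\}$, the number of visited phases is $\mathrm{Bin}\bigl(K,(D-1)/D\bigr)$, and a one-line application of Wald's identity gives $K(D-1)/2$ directly. Because extra information can only help the searcher, the bound transfers back to the original comparison oracle with no recursion and no per-level loss.
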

Hence, the bound in Theorem~\ref{th:content-search} is tight within a $c^2(\mu)\log(c(\mu))H_{\max}$ factor. 

We therefore turn our attention on \textsc{DetSWND}. Without loss of generality, we can assume that the weights $\lambda(s,t)$ are arbitrary non-negative numbers, as dividing every weight by $\sum_{s,t}\lambda(s,t)$ does not change the optimal solution. 
The decision problem  corresponding to  \textsc{DetSWND} is as follows
\begin{quote}
\textsc{DetSWND-D}: 
Given an embedding of $\Nn$ into $(\myset{M},d)$, a set of local edges $\myset{L}$,  a non-negative weight function $\lambda$, and two constants $\alpha>0$ and $\beta>0$,
 is there a directed edge set $S$ such that $|S|\leq\beta$ and $\sum_{(s,t)\times \Nn\times\Nn}\lambda(s,t)C_{{S}}(s,t)\leq\alpha?$
\end{quote}
Note that, given the set of shorcut edges $S$, forwarding a message with greedy forwarding from any $s$ to $t$ can take place in polynomial time. As a result, \textsc{DetSWND-D} is in NP. 
We will prove it is also NP-hard by reducing the following NP-complete problem to it:
\begin{quote}
\textsc{DominatingSet}: Given a graph $G(V,E)$ and a constant $k$, is there a set $A\subseteq V$ such that $|A|\leq k$ and $\Gamma(A)\cup A=V$, where $\Gamma(A)$ the neighborhood of $A$ in $G$?
\end{quote}

Given an instance $(G(V,E),k)$ of \textsc{DominatingSet}, we construct an instance of  \textsc{DetSWND-D} as follows.
The set $\Nn$ in this instance will be embedded in a 2-dimensional grid, and the distance metric $d$ will be the Manhattan distance on the grid. In particular, let $n=|V|$ be the size of the graph $G$ and, w.l.o.g., assume that $V=\{1,2,\ldots,n\}$. Let 
\begin{align}
\ell_0&= 6n+3, \\
\ell_1&= n\ell_0+2= 6n^2+3n+2,\label{l1}\\
\ell_2&= \ell_1+3n +1= 6n^2+6n+3.\label{l2}\\
\ell_3&=\ell_0= 6n+3,\label{l0l3}
\end{align}
 We construct a $n_1\times n_2$ grid, where $n_1 = (n-1)\cdot \ell_0 +1$ and $n_2=\ell_1+\ell_2+\ell_3+1$. That is, the total number of nodes in the grid is $$N = [(n-1)\cdot \ell_0 +1]\cdot (\ell_1+\ell_2+\ell_3+1)=\Theta(n^4).$$
The object set $\Nn$ will be the set of nodes in the above grid, and the metric space will be $(\mathbbm{Z}^2,d)$ where $d$ is the Manhattan distance on $\mathbbm{Z}^2$. The local edges $\myset{L}$ is defined according to \eqref{local} with $r=1$, \emph{i.e.},
and any two adjacent nodes in the grid are connected by an edge in $\myset{L}$.

Denote by $a_i$, $i=1,\ldots,n$, the node on the first column of the grid that resides at row $(i-1)\ell_0+1$. Similarly, denote by $b_i$, $c_i$ and $d_i$  the nodes on the columns $(\ell_1+1)$, $(\ell_1+\ell_2+1)$ and $(\ell_1+\ell_2+\ell_3+1)$  the grid, respectively, that reside at the same row as $a_i$, $i=1,\ldots,n$. These nodes are depicted in Figure~\ref{graph}.
We define the weight  function $\lambda(i,j)$ over the pairs of nodes in the grid as follows. The pairs of grid nodes that receive a non-zero weight are the ones belonging to one of the following sets:
\begin{align*}
A_1 &= \{(a_i,b_i)\mid i\in V\},\\ 
A_2 &= \{(b_i,b_j)\mid (i,j)\in E \}\!\cup\! \{(c_i,d_j)\mid (i,j)\in E\}\!\cup\!\{(c_i,d_i)\mid i\in V\},\\
A_3 &= \{(a_i,d_i)\mid i\in V\}.
\end{align*} 
The sets $A_1$ and $A_2$ are depicted in Fig.~\ref{graph} with dashed and solid lines, respectively. Note that 
$|A_1|= n$ as it contains one pair for each vertex in $V$, $|A_2|=4|E|+n$ as it contains four pairs for each edge in $E$ and one pair for each vertex in $V$, and, finally, $|A_3| = n.$
The pairs in $A_1$ receive a weight equal to 
$W_1 = 1,$ the pairs in $A_2$ receive a weight equal to $W_2 = 3n+1$ and the pairs in $A_3$ receive a weight equal to
$ W_3=1.$
\begin{figure}[t]
\begin{center}
\input{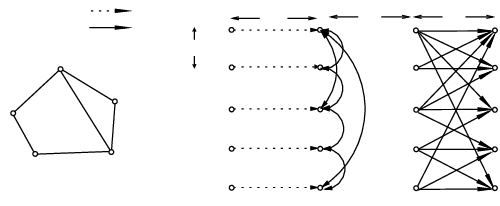}
\end{center}
\caption{\small A reduction of an instance of \textsc{DominatingSet} to an instance of \textsc{DetSWND-D}. Only the nodes on the grid that have non-zero incoming or outgoing demands (weights) are depicted. The dashed arrows depict $A_1$, the set of pairs that receive a weight $W_1$. The solid arrows depict $A_2$, the set of pairs that receive weight $W_2$.
}\label{graph}
\end{figure}

For the bounds $\alpha$ and $\beta$ take
\begin{eqnarray}
\alpha &\!=\! & 2W_1|\!A_1\!|\! +\! W_2|\!A_2\!|\! +\!3|\!A_3\!|W_3\nonumber\\ 
&\!=\! & (3n\!+\!1)(4|\!E|\!+\!n)\!+\!5n\label{alpha}
\end{eqnarray}
and
\begin{eqnarray}
\beta &=& |A_2|+n+k \nonumber \\
&=& 4|E|+2n+k \label{beta}.
\end{eqnarray}

The above construction can take place in polynomial time in $n$. Moreover, if the graph $G$ has a dominating set of size no more than $k$, one can construct a deterministic set of shortcut edges $\myset{S}$ that satisfies the constraints of \textsc{DetSWND-D}.

\begin{lemma} If the instance of \textsc{DominatingSet} is a ``yes'' instance, then the constructed instance of \textsc{DetSWND-D} is also a ``yes'' instance. 
\end{lemma}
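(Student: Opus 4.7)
The plan is to turn a dominating set $A \subseteq V$ of size at most $k$ into a deterministic shortcut set $\mathcal{S}$ of size at most $\beta$ whose weighted greedy-forwarding cost meets the bound $\alpha$. For each $i \in V$, I let $j_i \in A$ equal $i$ when $i \in A$ and otherwise be an arbitrary neighbour of $i$ in $A$ (one exists because $A$ dominates $V$). I then define $\mathcal{S}$ as the disjoint union of three blocks: (i) the $|A_2| = 4|E|+n$ pairs of $A_2$ themselves; (ii) the $k$ edges $(b_j, c_j)$ for $j \in A$; and (iii) the $n$ edges $(a_i, b_{j_i})$ for $i \in V$. Counting gives $|\mathcal{S}| = (4|E|+n) + k + n = \beta$.

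The heart of the plan is to verify the routing cost pair by pair, exploiting the row spacing $\ell_0 = 6n+3$ and the column gaps $\ell_1,\ell_2,\ell_3$ from \eqref{l1}--\eqref{l0l3}, which are calibrated so that the ``correct'' shortcut strictly beats every competitor. Pairs in $A_2$ are trivial: they are realised by a direct shortcut and incur cost $1$. For $(a_i, b_i) \in A_1$, greedy at $a_i$ follows the shortcut $(a_i, b_{j_i})$ because $|i - j_i|\ell_0 \leq (n-1)\ell_0 < \ell_1 - 1$; at $b_{j_i}$ the edge $(b_{j_i}, b_i)$ lies in $A_2$ (either $j_i = i$ or $(j_i,i) \in E$), and one further hop reaches the target, giving cost at most $2$. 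For $(a_i, d_i) \in A_3$, I would show greedy follows $a_i \to b_{j_i} \to c_{j_i} \to d_i$: the first hop is justified as above; at $b_{j_i}$ the inequality $(n-1)\ell_0 < \ell_2$ forces greedy to prefer the shortcut to $c_{j_i}$ over any $(b_{j_i}, b_k)$ alternative and over local moves; at $c_{j_i}$ the $A_2$ shortcut $(c_{j_i}, d_i)$ exists and delivers the message in one further hop, giving total cost $3$.

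Summing the weighted costs then yields at most $2 W_1|A_1| + W_2|A_2| + 3 W_3|A_3|$, which by \eqref{alpha} equals $\alpha$, completing the reduction. The step I expect to be the main obstacle is the case analysis at each intermediate vertex: one must certify that greedy strictly prefers the intended shortcut over every other outgoing edge, including misdirected shortcuts and local moves. This is precisely where the specific magnitudes in \eqref{l1}--\eqref{l0l3} play their role---$\ell_0$ must dominate any penalty from a row misalignment, $\ell_2$ must exceed $(n-1)\ell_0$ so the $b \to c$ step wins over $b \to b$ alternatives, $\ell_1$ must be large enough that the first hop is triggered, and $\ell_3 = \ell_0$ must be small enough relative to $\ell_2$ that greedy at $c_{j_i}$ does not waste a hop via $(c_{j_i}, d_{j_i})$ instead of routing to $d_i$. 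The fact that $A_2$ contains \emph{both} orientations of every edge of $G$ (which is why $|A_2| = 4|E|+n$) is also essential, as it guarantees the required return edges $(b_{j_i}, b_i)$ and $(c_{j_i}, d_i)$. Once these inequalities are verified in full, the lemma follows immediately.
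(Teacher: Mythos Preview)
Your construction and routing analysis are essentially identical to the paper's: the same shortcut set $A_2 \cup \{(b_j,c_j): j\in A\} \cup \{(a_i,b_{j_i}): i\in V\}$, the same three-hop path $a_i\to b_{j_i}\to c_{j_i}\to d_i$ for the $A_3$ pairs, and the same distance inequalities to force greedy along the intended edges. One minor slip: there are $|A|$ (not $k$) edges of type $(b_j,c_j)$, so $|\mathcal{S}| = (4|E|+n)+|A|+n \leq \beta$ rather than equality---but the constraint is $|\mathcal{S}|\leq\beta$, so this is harmless.
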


\begin{proof}
To see this, suppose that there exists a dominating set $A$ of the graph with size $|A|\leq k$. Then, for every $i\in V\setminus A$, there exists a $j\in A$ such that $i\in \Gamma(j)$, \emph{i.e.}, $i$ is a neighbor of $j$. We construct $\myset{S}$ as follows.
 For every $i \in A$, add the  edges $(a_i,b_i)$ and $(b_i,c_i)$  in $\myset{S}$.
For every $i \in V\setminus A$, add an edge $(a_i,b_j)$ in $\myset{S}$, where $j$ is such that $j\in A$ and  $i\in\Gamma(j)$.
 For every pair in $A_2$, add this edge in $\myset{S}$.
\begin{figure}[t]
\begin{center}
\input{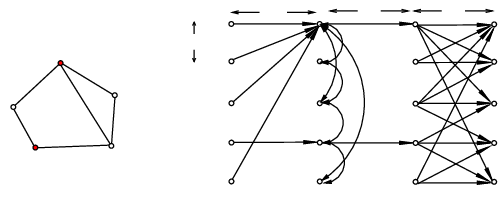}
\end{center}
\caption{\small A ``yes'' instance of \textsc{DominatingSet} and the corresponding ``yes'' instance of \textsc{DetSWND-D}. The graph on the left is can be dominated by two nodes, 1 and 4. The corresponding set $\myset{S}$ of shortcut contacts that satisfies the constraints of  \textsc{DetSWND-D} is depicted on the right.
}\label{instance}
\end{figure}
The size of $\myset{S}$ is 
\begin{eqnarray*}
|\myset{S}| &=& 2|A|+(|V|-|A|)+ |A_2|\\ &=& |A|+n+4|E|+n\\&\leq& 4|E|+2n+k.
\end{eqnarray*}
Moreover, the weighted forwarding distance is
\begin{align*}
\bar{C}_{\mathcal{S}}^w& =\!\! \sum_{(i,j)\in A_1}\!\!\!\!W_1 C_{\mathcal{S}}(i,j)\! +\!\!\sum_{(i,j)\in A_2}\!\!\!\!W_2 C_{\mathcal{S}}(i,j)\! +\!
\!\sum_{(i,j)\in A_3}\!\!\!\!W_3 C_{\mathcal{S}}(i,j).
\end{align*}
We have
$$\sum_{(i,j)\in A_2}W_2 C_{\mathcal{S}}(i,j) = W_2 |A_2|$$ 
as every pair in $A_2$ is connected by an edge in $\myset{S}$.
Consider now a pair  $a_i,b_i)\in A_1$, $i\in V$. There is exactly one edge in $\myset{S}$ departing from $a_i$ which has the form $(a_i,b_j)$, where where either $j=i$ is or $j$ a neighbor of $i$. The distance of the closest local neighbor of $a_i$ from $b_i$ is $\ell_1-1$. The distance of $b_j$ from $b_i$ is at most $n\cdot \ell_0$. As
$\ell_1-1=n\ell_0+2-1>n\ell_0 $ 
greedy forwarding will follow  $(a_i,b_j)$. If $b_j=b_i$, then $C_{\mathcal{S}}(a_i,b_i)=1$. If $b_j\neq b_i$, as $j$ is a neighbor of $i$, $\myset{S}$ contains the edge $(b_j,b_i)$. Hence, if $b_j\neq b_i$, $C_{\mathcal{S}}(a_i,b_i)=2$. As $i$ was arbitrary, we get that 
$$\sum_{(i,j)\in A_1}W_1 C_{\mathcal{S}}(i,j)\leq 2W_1n.$$ 

Next, consider a pair $(a_i,d_i)\in A_3$. For the same reasons as for the pair $(a_i,b_i)$, the shortcut edge $(a_i,b_j)$ in $\myset{S}$ will be used by the greedy forwarding  algorithm. In particular, the distance of the closest local  neighbor of $a_i$ from $d_i$ is $\ell_1+\ell_2+\ell_3-1$ and $d(b_j,d_i)$ is at most $\ell_2+\ell_3+n\cdot\ell_0$. 
 As $\ell_1-1>n\ell_0 $,
greedy forwarding will follow  $(a_i,b_j)$. 

By the construction of $\myset{S}$, $b_j$ is such that $j\in A$. As a result, again by the construction of $\myset{S}$, $(b_j,c_j)\in \myset{S}$. The closest local neighbor of $b_j$ to $d_i$ has  $\ell_2+\ell_3+d(b_j,b_i)-1$ Manhattan distance from $d_j$. Any shortcut neighbor $b_k$ of $b_j$ has at least $\ell_2+\ell_3$ Manhattan distance from $b_i$. On the other hand, $c_j$ has $\ell_3+d(b_j,b_i)$ Manhattan distance from $d_i$. As 
$\ell_2>1$ and $\ell_2>n\ell_0\geq d(b_j,b_i)$, the greedy forwarding algorithm will follow $(b_j,c_j)$. Finally, as $A_2\subset \myset{S}$, and $j=i$ or $j$ is a neighbor of $i$, the edge $(c_j,d_i)$ will be in $\myset{S}$. Hence, the greedy forwarding algorithm will reach $d_j$ in exactly 3 steps. As $i\in V$ was arbitrary, we get that
$$\sum_{(i,j)\in A_3}W_3 C_{\mathcal{S}}(i,j)= 3W_3n.$$ 
Hence, 
$$\bar{C}_{\mathcal{S}}^w \leq 2W_1n+ W_2 |A_2|+ 3W_3n = \alpha$$
and, therefore, the instance of \textsc{DetSWND-D} is a ``yes'' instance.
\end{proof}

To complete the proof, we show that a dominating set of size $k$ exists only if there exists a $\myset{S}$ that satisfies the constraints in constucted  instance of \textsc{DetSWND-D}.

\begin{lemma}\label{dettover}
 If the constucted  instance of \textsc{DetSWND-D} is a ``yes'' instance, then the  instance of \textsc{DominatingSet} is also a ``yes'' instance. 
\end{lemma}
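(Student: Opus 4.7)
The plan is to use the tight cost budget to extract, from any feasible $\myset{S}$, a dominating set of $G$ of size at most $k$.

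First, I would force $A_2 \subseteq \myset{S}$. Since $C_{\myset{S}}(i,j)\geq 1$ for every distinct pair, the minimum possible value of the weighted cost is $W_1|A_1|+W_2|A_2|+W_3|A_3| = 2n + (3n+1)(4|E|+n)$. Comparing with $\alpha$ from \eqref{alpha}, the total allowable slack above this minimum is $5n - 2n = 3n$. Since each unit of excess on an $A_2$ pair costs $W_2=3n+1>3n$, no $A_2$ pair can have cost exceeding $1$, which forces $A_2 \subseteq \myset{S}$. This uses $|A_2|=4|E|+n$ of the $\beta=4|E|+2n+k$ shortcut edges, leaving a budget of at most $n+k$ ``extra'' shortcuts $\myset{S}' := \myset{S}\setminus A_2$, together with the constraint $\sum_{i\in V}\bigl[C_{\myset{S}}(a_i,b_i)+C_{\myset{S}}(a_i,d_i)\bigr] \leq 5n$.

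Second, I would analyze the greedy routes for the $A_3$ pairs $(a_i,d_i)$. The spacings $\ell_0=\ell_3=6n+3$, $\ell_1=n\ell_0+2$, and $\ell_2=\ell_1+3n+1$ are tuned precisely so that the column gaps $\ell_1,\ell_2,\ell_3$ strictly dominate the largest in-column offset $n\ell_0$ among the vertices $\{a_i,b_i,c_i,d_i\}$. In particular $\ell_1-1>n\ell_0$, so the greedy forwarder at $a_i$ must take an outgoing shortcut rather than a local step; since $A_2$ contains no edge out of any $a_i$, this shortcut lies in $\myset{S}'$, contributing $n$ distinct ``starting'' extras. Reusing the distance comparisons from the forward direction, the remaining two hops must follow the canonical pattern $b_{j_i}\to c_{j_i}\to d_i$, with the final edge being in $A_2$. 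This is possible only when $j_i=i$ or $(i,j_i)\in E$, i.e., when $j_i$ dominates $i$ in $G$. Minor variants (e.g.\ a direct edge $(a_i,b_i)\in\myset{S}'$, or a route ending with a non-$A_2$ edge into $d_i$) are handled by parallel case analyses, each shown to consume at least as many extras as the canonical case.

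Finally, I would set $A:=\{j_i:i\in V\}\subseteq V$. Every $i\in V$ is dominated by $j_i\in A$, so $A$ is a dominating set of $G$. Counting extras, each $j\in A$ requires a distinct ``bridge'' shortcut $(b_j,c_j)\in\myset{S}'$, contributing $|A|$ extras disjoint from the $n$ starting extras out of the $a_i$. Hence $n+|A|\leq |\myset{S}'|\leq n+k$, giving $|A|\leq k$ and exhibiting $A$ as the required dominating set.

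The main obstacle is the second step: forcing the canonical form of the 3-hop route. One has to rule out the first shortcut out of $a_i$ landing outside the $b$-column, routes using local hops at intermediate stages, the final hop into $d_i$ being a non-$A_2$ edge, and the possibility that a single $a_i$ uses two different outgoing shortcuts for its two targets $b_i$ and $d_i$. Each case is excluded by strict distance comparisons against the engineered spacings, but enumerating them requires careful geometric bookkeeping.
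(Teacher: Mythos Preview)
Your first step—forcing $A_2\subseteq\myset{S}$ because the total slack is $3n<W_2$—is correct and matches the paper exactly. The overall budget decomposition (spend $|A_2|$ edges on $A_2$, then $n$ on one further family, leaving at most $k$ for the structure encoding the dominating set) is also the paper's.

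The gap is in your second step. You try to pin down the \emph{entire route} $a_i\to d_i$ to the canonical pattern $a_i\to b_{j_i}\to c_{j_i}\to d_i$ and then read off $j_i$. But nothing forces the adversary's extra shortcuts to have endpoints exactly at the named vertices $b_j,c_j$, or the route to be exactly three hops; a shortcut can land at any grid point, and local steps can be interleaved. The inequality $\ell_1-1>n\ell_0$ only says that \emph{if} a suitable shortcut exists greedy will prefer it to a local step—it does not force the shortcut's endpoint to be some $b_j$. The ``minor variants'' you defer are not minor: they are the whole difficulty, and some of them (a first shortcut landing at a generic grid point near some $c_j$, bypassing the $b$-column entirely; several routes sharing a single non-canonical bridging edge) are not excluded by distance comparisons alone and do not obviously produce one distinct $(b_j,c_j)$ edge per element of your set $A$.

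The paper sidesteps this by never insisting on a canonical route. It uses the $A_1$ pairs, not $A_3$, to isolate a set $S_1$ of at least $n$ shortcuts: for each $i$, some edge with origin in row $i$ to the left of column $\ell_1+1$ must be followed when routing $a_i\to b_i$, else that single route costs $\ell_1$ and blows the budget. Monotonicity of greedy forwarding then confines the \emph{targets} of all $S_1$-edges to columns $\le 2\ell_1$. With $A_2$ and $S_1$ removed, the residual $S_3=\myset{S}\setminus(S_1\cup A_2)$ has at most $k$ edges. On any $A_3$ route, crossing from column $2\ell_1$ to column $\ell_1+\ell_2+1$ cannot be done by $S_1$-edges (they point too far left) or $A_2$-edges (wrong origins/targets), so some $S_3$-edge is used; a further cost argument forces that edge's target to lie within Manhattan distance $3n+1$ of $d_i$ or of some $c_j$ with $(c_j,d_i)\in A_2$. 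Since $\ell_0=\ell_3=6n+3>2(3n+1)$, each such target is near a \emph{unique} node among $\{c_1,\dots,c_n,d_1,\dots,d_n\}$, and the dominating set is read off from those at most $k$ landing sites. The argument operates at the level of where shortcut endpoints fall, not what the full route looks like, and that is precisely what avoids the case explosion your plan runs into.
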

\begin{proof}
\sloppy Assume that 
 there exists a set $\myset{S}$, with $|\myset{S}|\leq \beta$  such that the augmented graph has a weighted forwarding distance less than or equal to $\alpha$.
\fussy
Then \begin{align}A_2\subseteq \myset{S}\label{lemmaS2}.\end{align} 
To see this, suppose that $A_2\not\subseteq \myset{S}$.  Then, there is at least one pair of nodes $(i,j)$ in $A_2$ with $C_{\mathcal{S}}(i,j)\geq 2$. Therefore, 
\begin{align*}
\bar{C}_{\myset{S}}^w& \geq 1\cdot W_1|A_1|+[(|A_2|-1)\cdot 1+  2]\cdot W_2+1\cdot W_3 |A_3|\\
& =
 (3n+1)(4|E|+n)+5n+1{>}\alpha,
\end{align*}
a contradiction.

Essentially, by choosing $W_2$ to be large, we enforce that all ``demands'' in $A_2$ are satisfied by a direct edge in $\myset{S}$. The next lemma shows a similar result for $A_1$. Using shortcut edges to satisfy these ``demands'' is enforced by making the distance $\ell_1$ very large.
\begin{lemma}\label{lemmaS1}
For every $i\in V$, there exists at least one shortcut edge in $\myset{S}$ whose origin is in the same row as $a_i$ and in a column to the left of $b_i$. Moreover, this edge is used during the greedy forwarding of a message from $a_i$ to $b_i$.
\end{lemma}
\begin{proof}
Suppose not. Then, there exists an $i\in V$ such that no shortcut edge has its origin  between $a_i$ and $b_i$, or such an edge exists but is not used by the greedy forwarding from $a_i$ to $b_i$ (\emph{e.g.}, because it points too far from $b_i$). Then, the greedy forwarding from $a_i$ to $b_i$ will use only local edges and, hence, $C_{\mathcal{S}}(a_i,b_i)=\ell_1.$
We thus have that
\begin{eqnarray*}
\bar{C}^w_\myset{S} &\geq& 
\ell_1+2n-1+W_2|A_2|\\ & \stackrel{\eqref{l1}}{=}&6n^2+5n+1+W_2|A_2|
\end{eqnarray*}
%
On the other hand, by \eqref{alpha}
$\alpha = 5n+ W_2|A_2|$
so $\bar{C}^w_\myset{S}>\alpha$, a contradiction.
\end{proof}
Let $S_1$ be the set of all edges whose origin is between some $a_i$ and $b_i$, $i\in V$, and that are used during forwarding from this $a_i$ to $b_i$. Note that Lemma~\ref{lemmaS1} implies that $|S_1|\geq n$.
 The target of any edge in $S_1$ must lie to the left of the $2\ell_1+1$-th column of the grid
This is because the Manhattan distance of $a_i$ to $b_i$ is $\ell_1$, so its left local neighbor lies at $\ell_1-1$ steps from $b_i$. Greedy forwarding is monotone, so the Manhattan distance from $b_i$ of any target of an edge followed subsequently to route towards $b_i$ must  be less than $\ell_1$. 

Essentially, all edges in $S_1$ must point close enough to $b_i$, otherwise they would not be used in greedy forwarding. This implies that, to forward the ``demands'' in $A_3$ 
an \emph{additional} set of shortcut edges 
need to be used.
\begin{lemma}\label{lemmaL3} For every $i\in V$, there exists at least one shortcut edge in $\myset{S}$ that is used when forwarding a message from $a_i$ to $d_i$ that is neither in $S_1$ nor in $A_2$.
\end{lemma}
\begin{proof}
Suppose not. We established above that the target of any edge in $S_1$ is to the left of the $2\ell_1+1$ column. Recall that
$ A_2 = \{(b_i,b_j)\mid (i,j)\in E \}\cup \{(c_i,d_j)\mid (i,j)\in E\}\cup\{(c_i,d_i)\mid i\in V\}.$
By the definition of $b_i$, $i \in V$, the targets of the edges in $\{(b_i,b_j)\mid (i,j)\in E \}$ lie on the $(\ell_1+1)$-th column. Similarly, the origins of the edges in $\{(c_i,d_j)\mid (i,j)\in E\}\cup\{(c_i,d_i)\mid i\in V\}$ lie on the $\ell_1+\ell_2+1$-th column.
As a result, if the lemma does not hold, there is a demand in $A_3$, say $(a_i,d_i)$, that does not use any additional shortcut edges. This means that the distance between the $2\ell+1$ and the  $\ell_1+\ell_2+1$-th column is traversed by using local edges. Hence, 
$C_{\mathcal{S}}(a_i,d_i)\geq \ell_2-\ell_1+1 $
as at least one additional step is needed to get to the $2\ell_1+1$-th column from $a_i$.
This implies that 
\begin{eqnarray*}
\bar{C}_{\mathcal{S}}^w &\geq& 
=2n+W_2|A_2|+\ell_2-\ell_1\\
&\stackrel{\eqref{l2}}{=}&  W_2|A_2|+5n+1>\alpha,
\end{eqnarray*}
a contradiction.
\end{proof}
Let $S_3=\myset{S}\setminus(S_1\cup A_2)$. Lemma~\ref{lemmaL3} implies that $S_3$ is non-empty, while \eqref{lemmaS2} and Lemma \ref{lemmaS1}, along with the fact that $|\myset{S}|\leq\beta= |A_2|+n+k$, imply that $|S_3|\leq k$. The following lemma states that some of these edges must have targets that are close enough to the destinations $d_i$.
\begin{lemma}\label{lemmafromdest} For each $i\in V$, there exists an edge in $S_3$ whose target is within Manhattan distance $3n+1$ of either $d_i$ or $c_j$, where $(c_j,d_i)\in A_2$. Moreover, this edge is used for forwarding a message from $a_i$ to $d_i$ with greedy forwarding.
\end{lemma}
\begin{proof}
Suppose not. Then there exists an $i$ for which greedy forwarding from $a_i$ to $d_i$ does not employ any edge fitting the description in the lemma. 
Then, the destination $d_i$  can not be reached by a shortcut edge in either $S_3$ or $A_1$ whose target is closer than $3n+1$ steps. Thus, $d_i$ is reached in one of the two following ways: either  $3n+1$ steps are required in reaching it, through forwarding over local edges, or an edge $(c_j,d_i)$ in $A_2$ is used to reach it. In the latter case,  reaching $c_i$ also requires at least $3n+1$ steps of local forwarding, as no edge in $A_2$ or $S_3$ has an target within $3n$ steps from it, and any edge in $S_1$ that may be this close is not used (by the hypothesis).
As a result,
$C_{\mathcal{S}}(a_i,d_i)\geq 3n+2 $
as at least one additional step is required in reaching the ball of radius $3n$ centered around $d_i$ or $c_i$ from $a_i$.
This gives 
$$\bar{C}_{\mathcal{S}}^w \geq
 5n + W_2|A_2|+1> \alpha,$$
a contradiction.
\end{proof}
When forwarding from $a_i$ to $d_i$, $i \in V$, there may be more than one edges in $S_3$ fitting the description in Lemma~\ref{lemmafromdest}. For each $i\in V$, consider the \emph{last} of all these edges. Denote the resulting subset by $S_3'$. By definition, $|S_3'|\leq |S_3|\leq k$. 
For each $i$, there exists \emph{exactly one} edge in $S_3'$ that is used to forward a message from $a_i$ to $d_i$. 
Moreover, recall that $\ell_0=\ell_3=6n+3$. Therefore, the Manhattan distance between any two nodes in $\{c_1,\ldots,c_n\}\cup\{d_1,\ldots,d_n\}$ is $2 (3n+1)+1$. As a result, the targets of the edges in $S_3'$ will be within distance $3n+1$ of \emph{exactly one} of the nodes in the above set. 

Let $A\subset V$ be the set of all vertices $i\in V$ such that the unique edge in $S_3'$ used in forwarding from $a_i$ to $d_i$ has an target within distance $3n+1$ of either $c_i$ or $d_i$. Then $A$ is a dominating set of $G$, and $|A|\leq k$. To see this, note first that  $|A|\leq k$ because each target of an edge in $S_3'$ can be within distance $3n+1$ of only one of the nodes in $\{c_1,\ldots,c_n\}\cup\{d_1,\ldots,d_n\}$, and there are at most $k$ edges in $S_3'$.

To see that $A$ dominates the graph $G$, suppose that $j\in V\setminus A$. Then, by Lemma~\ref{lemmafromdest}, the edge in $S_3'$ corresponding to $i$ is either pointing within distance $3n+1$ of either $d_j$ or a $c_i$ such that $(c_i,d_j)\in A_2$. By the construction of $A$, it cannot point in the proximity of $d_j$, because then $j\in A$, a contradiction. Similarly, it cannot point in the proximity of $c_j$, because then, again, $j\in A$, a contradiction. Therefore, it points in the proximity of some $c_i$, where $i\neq j$ and $(c_i,d_j)\in A_2$. By the construction of $A$, $i\in A$. Moreover, by the definition of $A_2$, $(c_i,d_j)\in A_2$ if and only if $(i,j)\in E$. Therefore, $j\in \Gamma(A)$. As $j$ was arbitrary, $A$ is a dominating set of $G$.  
\end{proof}

}
\subsection{Proof of Theorem~\protect\lowercase{\ref{th:small-world}}}\label{proof:small-world}
According to \eqref{shortcutdistr},  the probability that object $x$ links to $y$ is given by $\ell_x(y)=\frac{1}{Z_x} \frac{\mu(y)}{r_x(y)},$
where 
$Z_x=\sum_{y\in \myset{T}}\frac{\mu(y)}{r_x(y)}$
is a normalization factor bounded as follows.
\begin{lemma}\label{normalization}
For any $x\in \Nn$, let $x^*\in\min_{\lsteq{x}} \myset{T}$  be any object in $\myset{T}$ among the closest targets to $x$. Then
$Z_x\leq 1+\ln( 1/{\mu(x^*)})\leq 3H_{\max}. $
\end{lemma}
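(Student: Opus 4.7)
The plan is to evaluate $Z_x$ by grouping targets into equivalence classes at equal distance from $x$, apply the elementary inequality $1-u\le\ln(1/u)$ to obtain a telescoping bound, and finally convert the natural logarithm into base $2$ to introduce $H_{\max}$.

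Concretely, I would enumerate the equivalence classes of $\myset{T}$ under $\eqvl{x}$ as $\myset{E}_1,\ldots,\myset{E}_m$ in order of increasing distance from $x$, so that $x^*\in\myset{E}_1$. Setting $F_i=\sum_{j\le i}\mu(\myset{E}_j)$, with $F_0=0$ and $F_m=1$, every $y\in\myset{E}_i$ satisfies $r_x(y)=\mu(B_x(d(x,y)))=F_i$. Collecting terms by class,
$$Z_x \;=\; \sum_{i=1}^{m}\frac{\mu(\myset{E}_i)}{F_i} \;=\; \sum_{i=1}^{m}\frac{F_i-F_{i-1}}{F_i}.$$
The $i=1$ term equals $1$. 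For $i\ge 2$, taking $u=F_{i-1}/F_i\in(0,1]$ and using $1-u\le\ln(1/u)$ gives
$$\frac{F_i-F_{i-1}}{F_i} \;\le\; \ln\frac{F_i}{F_{i-1}},$$
and summing over $i=2,\ldots,m$ telescopes to $\ln(F_m/F_1)=\ln(1/F_1)$. Since $x^*\in\myset{E}_1$ we have $F_1\ge\mu(x^*)$, and hence $Z_x\le 1+\ln(1/\mu(x^*))$, the first inequality.

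For the second inequality I would just convert bases: $\ln(1/\mu(x^*))=(\ln 2)\log(1/\mu(x^*))\le (\ln 2)H_{\max}$ since $x^*\in\supp(\mu)$. In the non-degenerate regime $H_{\max}\ge 1$ (the case $|\myset{T}|=1$ being trivial, as then $Z_x=1$), this yields $1+\ln(1/\mu(x^*))\le (1+\ln 2)H_{\max}\le 3H_{\max}$.

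The only mild subtlety I anticipate is the correct treatment of ties—objects in $\myset{T}$ at equal distance from $x$—in the definition $r_x(y)=\mu(B_x(d(x,y)))$; this is why I group by the equivalence classes of $\eqvl{x}$ rather than by individual objects. Otherwise each step is an elementary manipulation.
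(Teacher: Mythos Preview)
Your argument is correct and shares the paper's overall structure: both proofs partition $\myset{T}$ into the equivalence classes of $\eqvl{x}$, express $Z_x$ as $\sum_i (F_i-F_{i-1})/F_i$ (in the paper's notation, $\sum_i \mu_i/Q_i$), and bound this sum by $1+\ln(1/F_1)$. The difference lies only in how that last bound is obtained. The paper performs an Abel-type rearrangement and then compares the resulting sum to the integral of the decreasing function $f_x(r)=1/r-\mu(x)$; you instead apply the elementary inequality $1-u\le\ln(1/u)$ term by term and telescope. Your route is shorter and avoids the integral-comparison machinery, while the paper's Riemann-sum technique is slightly more general in principle but yields nothing extra here. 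The paper does not spell out the passage from $1+\ln(1/\mu(x^*))$ to $3H_{\max}$; your base-change argument together with the observation that $H_{\max}\ge 1$ whenever $|\myset{T}|\ge 2$ fills that gap cleanly.
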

\begin{proof}
Sort the target set $\myset{T}$ from the closest to furthest object from $x$ and index objects in an increasing sequence $i=1,\ldots,k$, so the objects at the same distance from $x$ receive the same index. Let $A_i$, $i=1,\ldots,k$, be the set containing objects indexed by $i$, and let  $\mu_i=\mu(A_i)$ and $\mu_0=\mu(x)$. Furthermore, let $Q_i=\sum_{j=0}^{i}\mu_j$. Then 
$Z_x=\sum_{i=1}^{k}\frac{\mu_i}{Q_i}.$
Define $f_x(r):\mathbb{R^+}\rightarrow\mathbb{R}$ as $f_x(r)=\frac{1}{r}-\mu(x).$ Clearly, $f_x(\frac{1}{Q_i})=\sum_{j=1}^i \mu_j$, for $i\in\{1,2\dots,k\}$. This means that we can rewrite $Z_x$ as
$Z_x=\sum_{i=1}^{k}(f_x(1/{Q_i})-f_x(1/{Q_{i-1}}))/{Q_i}.$
By reordering the terms involved in the sum above, we get
$Z_x=  {f_x(\frac{1}{Q_{k}})}/{Q_{k}}+\sum_{i=1}^{k-1}f_x(1/Q_i)(\frac{1}{Q_i}-\frac{1}{Q_{i+1}}).$
First note that $Q_{k}=1$, and second that since $f_x(r)$ is a decreasing function, 
$
Z_x \leq 1-\mu_0+\int_{{1}/{Q_{k}}}^{{1}/{Q_1}} f_x(r) dr= 1-\frac{\mu_0}{Q_1}+\ln\frac{1}{Q_1}.
$
This shows that if $\mu_0=0$ then $Z_x\leq 1+\ln\frac{1}{\mu_1}$  or otherwise $Z_x\leq 1+\ln\frac{1}{\mu_0}$ .
\end{proof}
%

Given the set $\Ss$, recall that $C_{\Ss}(s,t)$ is the number of steps required by the greedy forwarding to reach $t\in \Nn$ from $s\in \Nn$. 
 We say that a message at object $v$ is in phase $j$ if $2^j \mu(t)\leq r_t(v)\leq2^{j+1}\mu(t)$. 
Notice that   the number of different phases  is at most $\log_2 1/\mu(t)$. We can write $C_{\Ss}(s,t)$ as 
\begin{equation}\label{phases}
C_{\Ss}(s,t)=X_1+X_2+\cdots +X_{\log \frac{1}{\mu(t)}},
\end{equation}
where $X_j$ are the hops occurring in phase $j$.
Assume that $j>1$, and let 
$I = \left\{w\in \Nn: r_t(w)\leq \frac{r_t(v)}{2}\right\}.$ The probability that $v$ links to an object in the set $I$, and hence moving to phase $j-1$, is 
$\sum_{w\in I}\ell_{v,w}=\frac{1}{Z_v}\sum_{w\in I} \frac{\mu(w)}{r_v(w)}.$
Let $\mu_t(r)=\mu(B_t(r))$  and $\rho>0$ be the smallest radius such that $\mu_t(\rho)\geq r_t(v)/2$. Since we assumed that $j>1$ such a $\rho>0$ exists. Clearly, for any $r<\rho$ we have $\mu_t(r)< r_t(v)/2$. In particular, $\mu_t(\rho/2)< \frac{1}{2}r_t(v)$. On the other hand, since the doubling parameter is $c(\mu)$ we have $\mu_t(\rho/2)> \frac{1}{c(\mu)}\mu_t(\rho)\geq \frac{1}{2c(\mu)}r_t(v)$. Therefore,
\begin{equation}\label{left-right}
\frac{1}{2c(\mu)}r_t(v)<\mu_t(\rho/2)<\frac{1}{2}r_t(v).
\end{equation}
Let $I_\rho=B_t(\rho)$ be the set of objects within radius $\rho/2$ from $t$.
Then $I_\rho\subset I$, so 
$\sum_{w\in I}\ell_{v,w}\geq  \frac{1}{Z_v}\sum_{w\in I_\rho} \frac{\mu(w)}{r_v(w)}.$
By triangle inequality, for any $w\in I_\rho$ and $y$ such that $d(y,v)\leq d(v,w$) we have $d(t,y)\leq  \frac{5}{2}d(v,t).$
This means that $r_v(w)\leq \mu_t(\frac{5}{2}d(v,t)),$ and consequently, $r_v(w)\leq  c^2(\mu) r_t(v). $ Therefore, 
$ \sum_{w\in I}\ell_{v,w} \geq \frac{1}{Z_v} \frac{\sum_{w\in I_\rho}\mu(w)}{c^2(\mu) r_t(v)} = \frac{1}{Z_v} \frac{\mu_t(\rho/2)}{c^2(\mu) r_t(v)}. $
By \eqref{left-right}, the probability of terminating phase $j$ is uniformly bounded by
\begin{equation}
\sum_{w\in I}\ell_{v,w} \geq \min_{v}\frac{1}{2c^3(\mu) Z_v}\overset{\text{Lem.}~\ref{normalization}}{\geq} \frac{1}{6c^3(\mu) H_{\max}(\mu)} \label{prob-phase}
\end{equation}
 As a result, the probability of terminating phase $j$ is stochastically dominated by a geometric random variable with the parameter given in \eqref{prob-phase}. This is because (a) if the current object does not have a shortcut edge which lies in the set $I$, by Property~\ref{greedyprop}, greedy forwarding sends the message to one of the neighbours that is closer to $t$ and (b) shortcut edges are sampled independently across neighbours. Hence, given that $t$ is the target object and $s$ is the source object, 
\begin{equation}\label{cost-conditional}
\E[X_j|s,t]\leq 6c^3(\mu)H_{\max}(\mu).
\end{equation}
Suppose now that $j=1$. By the triangle inequality, $B_v(d(v,t))\subseteq B_t(2d(v,t))$ and $r_v(t)\leq c(\mu) r_t(v)$. Hence, 
$
\ell_{v,t}\geq \frac{1}{Z_v}\frac{\mu(t)}{c(\mu) r_t(v)}\overset{}{\geq} \frac{1}{2c(\mu)Z_v}\geq\frac{1}{6c(\mu)H_{\max}(\mu)}$
since object $v$ is in the first phase and thus $\mu(t)\leq r_t(v)\leq 2\mu(t)$. Consequently,
\begin{equation}\label{cost-first}
\E[X_1|s,t]\leq 6c(\mu)H_{\max}(\mu).
\end{equation}
Combining \eqref{phases}, \eqref{cost-conditional}, \eqref{cost-first} and using the linearity of expectation, we get
$\E[C_{\Ss}(s,t)]\leq 6 c^3(\mu) H_{\max}(\mu) \log \frac{1}{\mu(t)}$ and, thus,
$\bar{C}_{\myset{S}}
\leq 6 c^3(\mu) H_{\max}(\mu) H(\mu).$
\techrepfix{}{
\subsection{Proof of Theorem\protect\lowercase{~\ref{th:content-search}}}\label{proof:content-search}
The idea of the proof is very similar to the previous one and follows the same path. Recall that the  selection policy is memoryless and determined by $$\prob(\myset{F}(\history_k,x_k)=w)=\ell_{x_k}(w).$$
We assume that the desired object is $t$ and the content search starts from $s$. Since there are no local edges, the only way that the greedy search moves from the current object $x_k$ is by proposing an object that is closer to $t$. 
 Like in the SWND case, we are in particular interested in bounding the probability  that the rank of the proposed object is roughly half the rank of the  current object. This way we can compute how fast we make progress in our search.  

As the search moves from $s$  to $t$ we say that the search is in phase  $j$ when the rank of the current object $x_k$ is between $2^j\mu(t)$ and $2^{j+1}\mu(t)$. As stated earlier, the greedy search algorithm keeps making comparisons until it finds another object closer to $t$.  We can write  $C_{\Ff}(s,t)$ as $$C_{\myset{F}}(s,t)=X_1+X_2+\cdots +X_{\log \frac{1}{\mu(t)}},$$ where $X_j$ denotes the number of comparisons done by comparison oracle in phase $j$. Let us consider a particular phase $j$ and denote $I$  the set of objects whose ranks from $t$ are at most $r_t(x_k)/2$. Note that phase $j$ will terminate if  the comparison oracle proposes an object from  set $I$. The probability that this happens is $$ \sum_{w\in I} \prob(\myset{F}(\history_k,x_k)=w)=\sum_{w\in I}\ell_{x_k,w} .$$ Note that the sum on the right hand side  depends on the distribution of shortcut edges and is independent of local edges. To bound this sum we can use \eqref{prob-phase}. Hence, with probability at least $1/(6c^3(\mu)H_{\max}(\mu))$,  phase $j$ will terminate. In other words, using the above selection policy, if the current object $x_k$ is in phase $j$, with probability  $1/(6c^3(\mu)H_{\max}(\mu))$ the proposed object will be in phase $(j-1)$. This defines a geometric random variable which yields to the fact that on average the number of queries needed to halve the rank is at most $6c(\mu)^3H_{\max}$ or $\E[X_j|s,t]\leq 6c(\mu)^3H_{\max}.$ Taking average over the demand $\lambda$, we can conclude that the average number of comparisons is less than $\bar{C}_{\myset{F}}\leq 6c^3(\mu)H_{\max}(\mu) H(\mu).$ 

}

\section{Learning Algorithm}\label{section:learning}
Section~\ref{section:mainresults} established bounds on the cost of greedy content search provided that the  distribution \eqref{shortcutdistr} is used to propose items to the oracle. Hence, if the  embedding of $\myset{N}$ in $(\myset{M},d)$ and target distribution $\mu$  are known, it is possible to perform greedy content search with the performance guarantees provided by Theorem~\ref{th:content-search}.

 In this section, we turn our attention to how such bounds can be achieved if \emph{neither} the  embedding in $(\myset{M},d)$ \emph{nor} the target distribution $\mu$ are a priori known. To this end, we propose a novel adaptive algorithm that achieves the performance guarantees of  Theorem~\ref{th:content-search} without access to the above information.

Our algorithm effectively learns the ranks $r_x(y)$ of objects and the target distribution $\mu$ as time progresses. 
It does not require that distances between objects are at any point disclosed; instead, we assume that it only has access to a comparison oracle, slightly stronger than the one described in Section~\ref{section:smallworld}. 

It is important to note that our algorithm is \emph{adaptive}: though we prove its convergence under a stationary regime, the algorithm can operate in a dynamic environment. For example, new objects can be added to the database while old ones can be removed. Moreover, the popularity of objects can change as time progresses. Provided that such changes happen infrequently, at a larger timescale compared to the timescale in which database queries are submitted, our algorithm will be able to adapt and  converge to the desired behavior.

\subsection{Demand Model and Probabilistic Oracle}
We  assume that time is slotted and that at each timeslot $\tau=0,1,\ldots$ a new query is generated in the database. As before, we assume that the source and target of the new query are selected according to a demand distribution $\lambda$ over $\Nn\times \Nn$. We again denote by $\nu$, $\mu$ the (marginal) source and target distributions, respectively. 

Our algorithm will require that the support of both the source and target distributions is $\Nn$, and more precisely that 
\begin{align}\label{strong}\lambda(x,y)>0, \text{ for all }x,y\in \Nn.\end{align}
The requirement that the target set $\myset{T}=\supp(\mu)$  is $\Nn$ is necessary to ensure learning; we can only infer the relative order w.r.t.~objects $t$ for which questions of the form $\oracle(x,y,t)$ are submitted to the oracle.
Moreover, it  is natural in our model to assume that the source distribution $\nu$ is at the discretion of our algorithm: we can choose which objects to propose first to the user/oracle. In this sense, for a given target distribution $\mu$ s.t.~$\supp(\mu)=\Nn$, \eqref{strong} can be enforced, \emph{e.g.}, by selecting source objects uniformly at random from $\Nn$ and independently of the target. 

We consider a slightly stronger oracle than the one described in Section~\ref{section:contentsearch}. In particular, we again assume that
\begin{equation}
\text{\oracle}(x,y,t)=\left\{
\begin{array}{ll} 
x & \text{if } x\lst{t} y,\\ 
y & \text{if } x\grt{t} y.
\end{array} \right.
\end{equation}
However, we further assume that if $x\eqvl{t}y$, then $\oracle(x,y,t)$ can return either of the two possible outcomes \emph{with non-zero probability}. This is stronger than the oracle in  Section~\ref{section:contentsearch}, where we assumed that the outcome will be arbitrary. 
We should point out here that this is still weaker than an oracle that correctly identifies  $x\eqvl{t} y$ (\emph{i.e.}, the human states that these objects are at equal distance from $t$) as, given such an oracle, we can implement the above probabilistic oracle by simply returning $x$ or $y$ with equal probability.


\subsection{Data Structures}
For every object $x\in\Nn$, the database storing $x$ also maintains the following associated data structures. The first data structure is a counter keeping track of how often the object $x$ has been requested so far. The second data structure maintains an order of the objects in $\Nn$; at any point in time, this total order is an ``estimator'' of $\lsteq{x}$, the order of objects with respect to their distance from $x$. We describe each one of these two data structures in more detail below.

\paragraph{Estimating the Target Distribution}

The first data structure associated with an object $x$ is an estimator of $\mu(x)$, \emph{i.e.}, the probability with which $x$ is selected as a target. A simple method for keeping track of this information is through a counter $C_x$. This counter $C_x$ is initially set to zero and is incremented every time  object $x$ is the target. If $C_x(\tau)$ is the counter at timeslot $\tau$, then \begin{align}\hat{\mu}(x) =C_x(\tau)/\tau \label{muest}\end{align} is  an unbiased estimator of $\mu(x)$. To avoid counting to infinity 
a ``moving average'' (\emph{e.g.}, and exponentially weighted moving average) could be used instead.

\paragraph{Maintaining a Partial Order}
The second data structure $\myset{O}_x$ associated with each $x\in\Nn$ maintains a total order of objects in $\Nn$ w.r.t.~their similarity to $x$. 
It supports an operation called \order() that returns a partition of objects in $\Nn$ along with a total order over this partition. In particular, the output of $\myset{O}_x.\order()$ consists of an ordered sequence of disjoint sets $A_1, A_2,\ldots, A_j$, where $\bigcup A_i=\Nn\setminus\{x\}$. Intuitively, any two objects in a set $A_i$ are considered to be at equal distance from $x$, while among two objects $u\in A_i$ and $v\in A_j$ with $i<j$ the object $u$ is assumed to be the closer to $x$.

\sloppy
Moreover, every time that the algorithm evokes $\oracle(u,v,x)$, and learns, \emph{e.g.}, that $u\lsteq{x} v$, the data structure $\myset{O}_x$ should be updated to reflect this information. In particular, if the algorithm has learned so far the order relationships \begin{align}u_1\lsteq{x} v_1,\quad u_2\lsteq{x} v_2,\quad \ldots, \quad u_i\lsteq{x} v_i\label{learnedorder}\end{align}
 $\myset{O}_x$.\order() should return the objects in $\Nn$ sorted in such a way that all relationships in \eqref{learnedorder} are respected. In particular, object $u_1$ should appear before $v_1$, $u_2$ before $v_2$, and so forth. To that effect, the data structure should also support an operation called $\myset{O}_x$.\add($u$,$v$) that adds the order relationship $u \lsteq{x} v$ to the constraints respected by the output of $\myset{O}_x$.\order().
\fussy

A simple (but not the most efficient) way of implementing this data structure is to represent order relationships through a directed acyclic graph. Initially, the graph's vertex set is $\Nn$ and its edge set is empty. Every time an operation \add($u$,$v$)  is executed, an edge is added between vertices $u$ and $v$. If the addition of the new edge creates a cycle then all nodes in the cycle are collapsed to a single node, keeping thus the graph acyclic. Note that the creation of a cycle $u\to v\to\ldots\to w\to u$ implies that $u\eqvl{x}v\eqvl{x}\ldots\eqvl{x}w$, \emph{i.e.}, all these nodes are at equal distance from $x$. 

Cycles can be detected by using depth-first search over the DAG \cite{clrs}. The sets $A_i$ returned by \order() are the sets associated with each collapsed node, while a total order among them that respects the constraints implied by the edges in the DAG can be obtained either by depth-first search or by a topological sort \cite{clrs}. Hence, the \add()  and \order() operations have a worst case cost of $\Theta(n+m)$, where $m$ is the total number of edges in the graph. 

Several more efficient algorithms exist in literature (see, for example,\cite{tarjanfast,pearce,bender}), where the best (in terms of performance) proposed by \cite{bender} yielding a cost of $O(n)$ for \order() and an aggregate cost of at most $O(n^2\log n)$ for any sequence of \add\ operations. We stress here  that any of these more efficient implementations could be used for our purposes. We refer the reader interested in such implementations to  \cite{tarjanfast,pearce,bender} and, to avoid any ambiguity, we assume the above na\"ive approach for the remainder of this work.

\subsection{Greedy Content Search}
Our learning algorithm implements greedy content search, as described in Section~\ref{section:contentsearch}, in the following manner. When a new query is submitted to the database, the algorithm first selects a source $s$ uniformly at random. It then performs greedy content search 
using a memoryless selection policy $\hat{\myset{F}}$ with distribution $\hat{\ell}_{x}$, \emph{i.e.},
\begin{align}\Pr(\myset{F}(\myset{H}_k,x_k) = w)=\hat{\ell}_{x_k}(w)\quad w\in\Nn. \label{hatpolicy}\end{align}
Below, we discuss in detail how  $\hat{\ell}_{x}$, $x\in\Nn$, are computed. 

 When the current object $x_k$, $k=0,1,\ldots$, is equal to $x$, the algorithm evokes $\myset{O}_{x_k}$.\order() and obtains an ordered partition $A_1,A_2,\ldots,A_{j}$ of items in $\Nn\setminus\{x\}$. We define $$\hat{r}_x(w)=\sum_{j=1}^{i:w\in A_i} \hat{\mu}(A_j), \quad w\in \Nn\setminus\{x\}.$$
This can be seen as an ``estimator'' of the true rank $r_x$ given by \eqref{rank}. The distribution $\hat{\ell}_{x}$ is then computed as follows:
\begin{align}\hat{\ell}_x(w) = \frac{\hat{\mu}(w)}{\hat{r}_x(w)}\frac{1-\epsilon}{\hat{Z}_x}+ \frac{\epsilon}{n-1},\quad i=1,\ldots,n-1, \label{hatell}\end{align}
where $\hat{Z}_x=\sum_{w\in \Nn\setminus\{x\}} \hat{\mu}(w)/\hat{r}_x(w)$ is a normalization factor and $\epsilon>0$ is a small constant.
An alternative view of \eqref{hatell} is that the object proposed is selected uniformly at random with probability $\epsilon$, and proportionally to $\hat{\mu}(w_i)/\hat{r}_x(w_i)$ with probability $1-\epsilon$. The use of $\epsilon>0$ guarantees that every search eventually finds the target $t$. 

Upon locating a target $t$, any access to the oracle in the history $\history_k$ can be used to update  $\myset{O}_t$; in particular, a call \oracle($u$,$v$,$t$) that returns $u$ implies the constraint $u\lsteq{t} v$, which should be added to the data structure through $\myset{O}_t$.\add($u,v$). Note that this operation can take place only at the \emph{end of the greedy content search}; the outcomes of calls to the oracle can be observed, but the target $t$ is  revealed only after it has been located.

Our main result is that, as $\tau$ tends to infinity, the above algorithm achieves performance guarantees arbitrarily close to the ones of Theorem~\ref{th:content-search}.
Let $\hat{\myset{F}}(\tau)$ be the selection policy defined by \eqref{hatpolicy} at timeslot $\tau$ and denote by 
$$\bar{C}(\tau) = \sum_{(s,t)\in \Nn\times \Nn}\lambda(s,t)\sum_{s\in \Nn} \expect[C_{\hat{F}(\tau)}(s,t)] $$
the expected search cost at timeslot $\tau$. Then the following theorem holds:
\begin{theorem}\label{convergence}
Assume that for any two targets $u,v\in\Nn$, $\lambda(u,v)>0$.
$$\limsup_{\tau\to\infty}\bar{C}(\tau)\leq \frac{6 c^3(\mu) H(\mu) H_{\max}(\mu)}{(1-\epsilon)}$$
where $c(\mu)$, $H(\mu)$ and $H_{\max}(\mu)$ are the doubling parameter, the entropy and the max entropy, respectively, of the target distribution $\mu$.
\end{theorem}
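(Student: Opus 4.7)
The plan is to leverage Theorem~\ref{th:content-search} by showing that the state of the algorithm converges almost surely to one in which the selection policy $\hat{\myset{F}}(\tau)$ coincides with a $(1-\epsilon)$-perturbation of the ideal memoryless policy $\myset{F}$ from Algorithm~\ref{algo:memoryless}. The overall structure is: (i) prove almost-sure convergence of the internal estimators, (ii) show that at the limit the phase-termination analysis of Theorem~\ref{th:small-world} still applies with a $1/(1-\epsilon)$ penalty, and (iii) transfer the per-state bound to the time-averaged quantity $\bar{C}(\tau)$ via a uniform-continuity argument.

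First I would establish convergence of the estimators. Because each query samples a target from $\mu$ i.i.d., the strong law of large numbers gives $\hat{\mu}(x)\to\mu(x)$ a.s.\ for every $x\in\Nn$ (using $\mu(x)>0$ from the hypothesis $\lambda(u,v)>0$). For the partial orders $\myset{O}_t$, note that a fresh search is initiated every timeslot from a source drawn uniformly over $\Nn$, independently of the target, and that the proposal distribution $\hat{\ell}_x$ puts mass at least $\epsilon/(n-1)$ on every $w\in\Nn$. Hence, for every triple $(u,v,t)$, the event ``at some timeslot, $t$ is the target, $u$ is reached as current object, and $v$ is proposed'' has a positive probability bounded away from zero at every slot, so by Borel--Cantelli it occurs infinitely often. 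Combined with the probabilistic tie-breaking of the oracle, this shows that every true relation $u\lst{t}v$ and every equivalence $u\eqvl{t}v$ is eventually registered in $\myset{O}_t$ (cycles collapse equivalent objects), and therefore $\hat{r}_x(w)\to r_x(w)$ a.s. Consequently $\hat{Z}_x\to Z_x$ and
\begin{equation*}
\hat{\ell}_x(w) \;\xrightarrow{a.s.}\; \ell_x^{\star}(w)\;\equiv\;(1-\epsilon)\,\ell_x(w) + \frac{\epsilon}{n-1}.
\end{equation*}

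Next I would bound the expected cost of the idealised limit policy $\myset{F}^\star$ with distribution $\ell_x^\star$. Re-running the phase-based argument of Theorem~\ref{th:small-world}/\ref{th:content-search}, the probability of moving from phase $j$ to phase $j-1$ in one step under $\ell_x^\star$ is at least $(1-\epsilon)$ times the corresponding probability under $\ell_x$, since the uniform exploration term contributes non-negatively. This yields $\E[X_j\mid s,t]\le 6c^3(\mu)H_{\max}(\mu)/(1-\epsilon)$ and hence $\bar{C}_{\myset{F}^\star}\le 6c^3(\mu)H(\mu)H_{\max}(\mu)/(1-\epsilon)$.

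Finally I would carry the bound over from the limit policy to $\bar{C}(\tau)$. The key inequality is that, when $\hat{\ell}_{x}(w)\ge(1-\eta)\,\ell_x^{\star}(w)$ for every $x,w$, the same phase argument gives $\bar{C}(\tau)\le 6c^3(\mu)H(\mu)H_{\max}(\mu)/[(1-\epsilon)(1-\eta)]$. By the a.s.\ convergence above, for any $\eta>0$ this uniform-ratio bound holds for all $\tau$ large enough, almost surely; letting $\eta\to 0$ and taking the limsup yields the theorem. The main obstacle, and the step I would spend most care on, is this last interchange: one must argue uniformly over $x,w$ (a finite set, so this is straightforward) and, more delicately, control $\bar{C}(\tau)$ on the vanishing event where estimators have not yet stabilised. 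This is handled by observing that the $\epsilon/(n-1)$ floor on every proposal yields a crude but uniformly finite upper bound on $\E[C_{\hat{\myset{F}}(\tau)}(s,t)]$ (each search is stochastically dominated by a geometric hitting time with parameter $\ge \epsilon/(n-1)$), which together with the almost-sure convergence justifies the limsup.
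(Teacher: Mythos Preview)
Your proposal is correct and follows essentially the same route as the paper's proof: establish convergence of the estimators $\hat{\mu}$ and $\hat{r}_x$, deduce that $\hat{\ell}_x$ approaches $(1-\epsilon)\ell_x+\epsilon/(n-1)$, rerun the phase-termination argument of Theorem~\ref{th:small-world} with the $(1-\epsilon)$ penalty, and control the pre-convergence contribution via the crude geometric bound coming from the $\epsilon/(n-1)$ floor. The only notable differences are cosmetic: the paper works with convergence in probability (weak law) and an explicit split on $\{\Delta_\mu\le\delta\}$ versus $\{\Delta_\mu>\delta\}$, whereas you invoke the strong law and Borel--Cantelli; and for the order-learning step the paper uses directly that $\lambda(u,t)>0$ makes $u$ a source with target $t$ infinitely often (so the very first proposal is $v$ with probability at least $\epsilon/(n-1)$), which is slightly simpler than your ``$u$ is reached as current object'' formulation but equivalent in effect.
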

The proof of this theorem can be found in Section~\ref{proof:convergence}.

\section{Analysis}\label{section:proofs}

This section includes the proofs of our theorems.

\subsection{Proof of Theorem~\protect\lowercase{\ref{hardnessthm}}}\label{proof:hardness}

\subsection{Proof of Theorem~\protect\lowercase{\ref{th:small-world}}}\label{proof:small-world}

According to \eqref{shortcutdistr},  the probability that object $x$ links to $y$ is given by $\ell_x(y)=\frac{1}{Z_x} \frac{\mu(y)}{r_x(y)},$
where 
$$Z_x=\sum_{y\in \myset{T}}\frac{\mu(y)}{r_x(y)}$$
is a normalization factor bounded as follows.
\begin{lemma}\label{normalization}
For any $x\in \Nn$, let $x^*\in\min_{\lsteq{x}} \myset{T}$  be any object in $\myset{T}$ among the closest targets to $x$. Then
$$Z_x\leq 1+\ln( 1/{\mu(x^*)})\leq 3H_{\max}. $$
\end{lemma}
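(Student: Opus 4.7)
The plan is to upper-bound $Z_x$ by sorting the targets by distance from $x$, expressing $Z_x$ as a discrete sum over a step function, and then dominating that sum by an integral of $1/r$ which yields the logarithm.

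First, I would sort $\myset{T}$ by non-decreasing distance from $x$ and group targets at the \emph{same} distance into equivalence classes $A_1,A_2,\ldots,A_k$, where $A_1$ contains the closest elements of $\myset{T}$ to $x$ (so $x^{*}\in A_1$). Set $\mu_i=\mu(A_i)$ and, to account neatly for the case $x\in\myset{T}$, set $\mu_0=\mu(\{x\})$ (with $\mu_0=0$ if $x\notin\myset{T}$) and $Q_i=\sum_{j=0}^{i}\mu_j$. The crucial observation is that the rank is constant on each class: for any $y\in A_i$ we have $r_x(y)=\mu(B_x(d(x,y)))=Q_i$. Hence
$$Z_x=\sum_{i=1}^{k}\frac{\mu_i}{Q_i}.$$

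Next I would transform this sum so that monotonicity can be exploited. Define the decreasing function $f_x(r)=1/r-\mu_0$ on $(0,\infty)$; then $f_x(1/Q_i)=Q_i-\mu_0=\sum_{j=1}^{i}\mu_j$, so $\mu_i=f_x(1/Q_i)-f_x(1/Q_{i-1})$ and
$$Z_x=\sum_{i=1}^{k}\frac{f_x(1/Q_i)-f_x(1/Q_{i-1})}{Q_i}.$$
An Abel-style reordering gives
$$Z_x=\frac{f_x(1/Q_k)}{Q_k}+\sum_{i=1}^{k-1}f_x(1/Q_i)\Bigl(\frac{1}{Q_i}-\frac{1}{Q_{i+1}}\Bigr).$$
Since $Q_k=\mu(\myset{T})=1$, the first term equals $1-\mu_0\le 1$. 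For each term in the sum, $f_x$ is evaluated at the right endpoint $1/Q_i$ of the interval $[1/Q_{i+1},1/Q_i]$ and $f_x$ is decreasing, so
$$f_x(1/Q_i)\Bigl(\frac{1}{Q_i}-\frac{1}{Q_{i+1}}\Bigr)\le\int_{1/Q_{i+1}}^{1/Q_i}f_x(r)\,dr.$$
Telescoping and using $Q_k=1$,
$$Z_x\le 1-\mu_0+\int_{1}^{1/Q_1}\Bigl(\frac{1}{r}-\mu_0\Bigr)dr=1-\frac{\mu_0}{Q_1}+\ln\frac{1}{Q_1}.$$
Dropping the negative term and noting that $Q_1\ge\mu(x^{*})$ (since $x^{*}\in A_1$, and $Q_1$ also contains $\mu_0$), we obtain
$$Z_x\le 1+\ln\frac{1}{\mu(x^{*})}.$$

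Finally, I would convert this to the $H_{\max}$ bound. Since $\ln a\le\log_2 a$ and, by the definition of max-entropy, $\log_2(1/\mu(x^{*}))\le H_{\max}(\mu)$, we get $Z_x\le 1+H_{\max}(\mu)$. Under the standing non-degeneracy assumption $H_{\max}(\mu)\ge 1/2$ (otherwise $\mu$ is essentially a point mass and the theorem is trivial), this gives $Z_x\le 3H_{\max}(\mu)$, completing the proof. The main obstacle is the bookkeeping in step two: one has to treat $\mu_0$ correctly so that the boundary contribution at $Q_k=1$ and at $Q_1$ line up with the integral bound, and one must verify that the right-endpoint choice for $f_x$ on each subinterval does yield an upper bound (not a lower bound) via monotonicity.
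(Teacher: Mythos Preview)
Your proposal is correct and follows essentially the same approach as the paper: the paper likewise groups targets into distance-equivalence classes $A_i$, sets $\mu_0=\mu(x)$ and $Q_i=\sum_{j\le i}\mu_j$, writes $Z_x=\sum_i \mu_i/Q_i$, introduces the same auxiliary function $f_x(r)=1/r-\mu_0$, performs the identical Abel-type reordering, and bounds the resulting sum by the integral $\int_{1}^{1/Q_1}f_x(r)\,dr$ to obtain $Z_x\le 1-\mu_0/Q_1+\ln(1/Q_1)$. The only cosmetic difference is that the paper records the final case split (``if $\mu_0=0$ then $Z_x\le 1+\ln(1/\mu_1)$, otherwise $Z_x\le 1+\ln(1/\mu_0)$'') rather than phrasing it via $Q_1\ge\mu(x^*)$ as you do; both are equivalent, though note that when $x\in\myset{T}$ one has $x^*=x\notin A_1$, so your parenthetical ``since $x^*\in A_1$'' should be read as covering only the case $x\notin\myset{T}$ (the inequality $Q_1\ge\mu(x^*)$ still holds via $Q_1\ge\mu_0$ in the other case).
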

\begin{proof}
Sort the target set $\myset{T}$ from the closest to furthest object from $x$ and index objects in an increasing sequence $i=1,\ldots,k$, so the objects at the same distance from $x$ receive the same index. Let $A_i$, $i=1,\ldots,k$, be the set containing objects indexed by $i$, and let  $\mu_i=\mu(A_i)$ and $\mu_0=\mu(x)$. Furthermore, let $Q_i=\sum_{j=0}^{i}\mu_j$. Then 
$Z_x=\sum_{i=1}^{k}\frac{\mu_i}{Q_i}.$

Define $f_x(r):\mathbb{R^+}\rightarrow\mathbb{R}$ as $$f_x(r)=\frac{1}{r}-\mu(x).$$ Clearly, $f_x(\frac{1}{Q_i})=\sum_{j=1}^i \mu_j$, for $i\in\{1,2\dots,k\}$. This means that we can rewrite $Z_x$ as
$$Z_x=\sum_{i=1}^{k}(f_x(1/{Q_i})-f_x(1/{Q_{i-1}}))/{Q_i}.$$
By reordering the terms involved in the sum above, we get
$$Z_x=  {f_x(\frac{1}{Q_{k}})}/{Q_{k}}+\sum_{i=1}^{k-1}f_x(1/Q_i)\left(\frac{1}{Q_i}-\frac{1}{Q_{i+1}}\right).$$
First note that $Q_{k}=1$, and second that since $f_x(r)$ is a decreasing function, 
\begin{eqnarray*}
Z_x &\leq& 1-\mu_0+\int_{{1}/{Q_{k}}}^{{1}/{Q_1}} f_x(r) dr\\
&=& 1-\frac{\mu_0}{Q_1}+\ln\frac{1}{Q_1}.
\end{eqnarray*}

This shows that if $\mu_0=0$ then $Z_x\leq 1+\ln\frac{1}{\mu_1}$  or otherwise $Z_x\leq 1+\ln\frac{1}{\mu_0}$ .
\end{proof}
%

Given the set $\Ss$, recall that $C_{\Ss}(s,t)$ is the number of steps required by the greedy forwarding to reach $t\in \Nn$ from $s\in \Nn$. 
 We say that a message at object $v$ is in \textit{phase} $j$ if $$2^j \mu(t)\leq r_t(v)\leq2^{j+1}\mu(t).$$ 
Notice that   the number of different phases  is at most $\log_2 1/\mu(t)$. We can write $C_{\Ss}(s,t)$ as 
\begin{equation}\label{phases}
C_{\Ss}(s,t)=X_1+X_2+\cdots +X_{\log \frac{1}{\mu(t)}},
\end{equation}
where $X_j$ are the hops occurring in phase $j$.
Assume that $j>1$, and let 
$$I = \left\{w\in \Nn: r_t(w)\leq \frac{r_t(v)}{2}\right\}.$$ The probability that $v$ links to an object in the set $I$, and hence moving to phase $j-1$, is 
$$\sum_{w\in I}\ell_{v,w}=\frac{1}{Z_v}\sum_{w\in I} \frac{\mu(w)}{r_v(w)}.$$
Let $\mu_t(r)=\mu(B_t(r))$  and $\rho>0$ be the smallest radius such that $\mu_t(\rho)\geq r_t(v)/2$. Since we assumed that $j>1$ such a $\rho>0$ exists. Clearly, for any $r<\rho$ we have $\mu_t(r)< r_t(v)/2$. In particular, 
\begin{equation}\label{right}
\mu_t(\rho/2)< \frac{1}{2}r_t(v). 
\end{equation}

On the other hand, since the doubling parameter is $c(\mu)$ we have 
\begin{equation}\label{left}
\mu_t(\rho/2)> \frac{1}{c(\mu)}\mu_t(\rho)\geq \frac{1}{2c(\mu)}r_t(v). 
\end{equation}

Therefore, by combining \eqref{right} and \eqref{left} we obtain
\begin{equation}\label{left-right}
\frac{1}{2c(\mu)}r_t(v)<\mu_t(\rho/2)<\frac{1}{2}r_t(v).
\end{equation}
Let $I_\rho=B_t(\rho)$ be the set of objects within radius $\rho/2$ from $t$.
Then $I_\rho\subset I$, so 
$$\sum_{w\in I}\ell_{v,w}\geq  \frac{1}{Z_v}\sum_{w\in I_\rho} \frac{\mu(w)}{r_v(w)}.$$
By triangle inequality, for any $w\in I_\rho$ and $y$ such that $d(y,v)\leq d(v,w$) we have
\begin{eqnarray*}
d(t,y)&\overset{(a)}{\leq}& d(v,y)+d(v,t)\\
&\leq & d(w,y)+d(v,t)\\
&\overset{(b)}{\leq}& d(t,w)+d(v,t)+d(v,t)\\
&\overset{(c)}{\leq}& \frac{1}{2}d(v,t)+d(v,t)+d(v,t)\\
&\leq & \frac{5}{2}d(v,t),
\end{eqnarray*}
where in $(a)$ and $(b)$ we used the triangle inequality and in $(c)$ we used the fact that $\rho/2<d(v,t)/2$.
This means that $r_v(w)\leq \mu_t(\frac{5}{2}d(v,t)),$ and consequently, $r_v(w)\leq  c^2(\mu) r_t(v). $ Therefore, 
\begin{eqnarray*}
 \sum_{w\in I}\ell_{v,w}& \geq& \frac{1}{Z_v} \frac{\sum_{w\in I_\rho}\mu(w)}{c^2(\mu) r_t(v)}\\ 
 &=& \frac{1}{Z_v} \frac{\mu_t(\rho/2)}{c^2(\mu) r_t(v)}. 
\end{eqnarray*}
By \eqref{left-right}, the probability of terminating phase $j$ is uniformly bounded by
\begin{eqnarray}
\sum_{w\in I}\ell_{v,w} &\geq& \min_{v}\frac{1}{2c^3(\mu) Z_v}\nonumber\\ &\overset{\text{Lem.}~\ref{normalization}}{\geq}& \frac{1}{6c^3(\mu) H_{\max}(\mu)} \label{prob-phase}
\end{eqnarray}
 As a result, the probability of terminating phase $j$ is stochastically dominated by a geometric random variable with the parameter given in \eqref{prob-phase}. This is because (a) if the current object does not have a shortcut edge which lies in the set $I$, by Property~\ref{greedyprop}, greedy forwarding sends the message to one of the neighbours that is closer to $t$ and (b) shortcut edges are sampled independently across neighbours. Hence, given that $t$ is the target object and $s$ is the source object, 
\begin{equation}\label{cost-conditional}
\E[X_j|s,t]\leq 6c^3(\mu)H_{\max}(\mu).
\end{equation}
Suppose now that $j=1$. By the triangle inequality, $B_v(d(v,t))\subseteq B_t(2d(v,t))$ and $r_v(t)\leq c(\mu) r_t(v)$. Hence, 
$$
\ell_{v,t}\geq \frac{1}{Z_v}\frac{\mu(t)}{c(\mu) r_t(v)}\overset{}{\geq} \frac{1}{2c(\mu)Z_v}\geq\frac{1}{6c(\mu)H_{\max}(\mu)}$$
since object $v$ is in the first phase and thus $\mu(t)\leq r_t(v)\leq 2\mu(t)$. Consequently,
\begin{equation}\label{cost-first}
\E[X_1|s,t]\leq 6c(\mu)H_{\max}(\mu).
\end{equation}
Combining \eqref{phases}, \eqref{cost-conditional}, \eqref{cost-first} and using the linearity of expectation, we get
$$\E[C_{\Ss}(s,t)]\leq 6 c^3(\mu) H_{\max}(\mu) \log \frac{1}{\mu(t)}$$ and, thus,
$\bar{C}_{\myset{S}}
\leq 6 c^3(\mu) H_{\max}(\mu) H(\mu).$

\subsection{Proof of Theorem\protect\lowercase{~\ref{th:content-search}}}\label{proof:content-search}
The idea of the proof is very similar to the previous one and follows the same path. Recall that the  selection policy is memoryless and determined by $$\prob(\myset{F}(\history_k,x_k)=w)=\ell_{x_k}(w).$$
We assume that the desired object is $t$ and the content search starts from $s$. Since there are no local edges, the only way that the greedy search moves from the current object $x_k$ is by proposing an object that is closer to $t$. 
 Like in the SWND case, we are in particular interested in bounding the probability  that the rank of the proposed object is roughly half the rank of the  current object. This way we can compute how fast we make progress in our search.  

As the search moves from $s$  to $t$ we say that the search is in phase  $j$ when the rank of the current object $x_k$ is between $2^j\mu(t)$ and $2^{j+1}\mu(t)$. As stated earlier, the greedy search algorithm keeps making comparisons until it finds another object closer to $t$.  We can write  $C_{\Ff}(s,t)$ as $$C_{\myset{F}}(s,t)=X_1+X_2+\cdots +X_{\log \frac{1}{\mu(t)}},$$ where $X_j$ denotes the number of comparisons done by comparison oracle in phase $j$. Let us consider a particular phase $j$ and denote $I$  the set of objects whose ranks from $t$ are at most $r_t(x_k)/2$. Note that phase $j$ will terminate if  the comparison oracle proposes an object from  set $I$. The probability that this happens is $$ \sum_{w\in I} \prob(\myset{F}(\history_k,x_k)=w)=\sum_{w\in I}\ell_{x_k,w} .$$ Note that the sum on the right hand side  depends on the distribution of shortcut edges and is independent of local edges. To bound this sum we can use \eqref{prob-phase}. Hence, with probability at least $1/(6c^3(\mu)H_{\max}(\mu))$,  phase $j$ will terminate. In other words, using the above selection policy, if the current object $x_k$ is in phase $j$, with probability  $1/(6c^3(\mu)H_{\max}(\mu))$ the proposed object will be in phase $(j-1)$. This defines a geometric random variable which yields to the fact that on average the number of queries needed to halve the rank is at most $6c(\mu)^3H_{\max}$ or $\E[X_j|s,t]\leq 6c(\mu)^3H_{\max}.$ Taking average over the demand $\lambda$, we can conclude that the average number of comparisons is less than $\bar{C}_{\myset{F}}\leq 6c^3(\mu)H_{\max}(\mu) H(\mu).$ 

\subsection{Proof of Theorem~\protect\lowercase{\ref{lowerbound}}}\label{proofoflowerbound}
Our proof amounts to constructing a metric space and a target distribution $\mu$ for which the bound holds. 
Our construction will be as follows. For some integers $D,K$, the target set ${\mathcal{N}}$ is taken as ${\mathcal{N}}=\{1,\ldots ,D\}^K$. The distance $d(x,y)$ between two distinct elements $x,y$ of ${\mathcal{N}}$ is defined as $d(x,y)=2^{m}$, where
$$
m=\max\left\{i\in\{1,\ldots,K\}: x(K-i)\ne y(K-i)\right\}.
$$
We then have the following
\begin{lemma}
\label{prop1} 
Let $\mu$ be the uniform distribution over $\myset{N}$. Then (i)  $c(\mu)=D$, and
(ii) if the target distribution is $\mu$, the optimal average search cost $C^*$ based on a comparison oracle satisfies
$C^*\ge K \frac{D-1}{2}$.
\end{lemma}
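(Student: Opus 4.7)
The plan is to verify (i) by computing ball sizes directly in the ultrametric and to prove (ii) by decomposing every search into $K$ coordinate-finding sub-searches, each costing at least $(D-1)/2$ queries on average. For (i), observe that the metric is ultrametric: $d(x,y) = 2^m$ where $K-m$ is the first position on which $x$ and $y$ disagree. Hence for any $x \in \myset{N}$ and any $\ell \in \{0, 1, \ldots, K\}$ the ball $B_x(2^\ell)$ is exactly the set of $y$ agreeing with $x$ on positions $0, 1, \ldots, K-\ell-1$, so $|B_x(2^\ell)| = D^\ell$. All nonzero distances take values in $\{2,4,\ldots,2^K\}$, so doubling any radius either keeps a ball unchanged or multiplies its cardinality by exactly $D$; $\mu$ being uniform then yields $c(\mu) = D$.

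For (ii), by Yao's principle it suffices to fix a deterministic selection policy $\myset{F}$ and lower-bound $\E_t[C_{\myset{F}}(s,t)]$ for $t$ uniform on $\myset{N}$. For each target $t$ define $\tau_\ell(t) := \min\{k : y_k \text{ matches } t \text{ on its first } \ell \text{ coordinates}\}$, so $\tau_0(t)=0$, $\tau_K(t) = C_{\myset{F}}(s,t)$, and $C_{\myset{F}}(s,t) = \sum_{\ell=1}^K [\tau_\ell(t) - \tau_{\ell-1}(t)]$. It thus suffices to show the per-phase bound $\E_t[\tau_\ell(t) - \tau_{\ell-1}(t)] \geq (D-1)/2$ for each $\ell$, since summing over $\ell$ gives the lemma.

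To prove this per-phase bound, fix $\ell$, condition on the coordinates $t(j)$ with $j \neq \ell-1$, and let $v := t(\ell-1)$ be uniform on $\{1, \ldots, D\}$. First, the execution is the same for all $v$ up to and including time $\tau_{\ell-1}$: before any proposed object matches $t$ on its first $\ell-1$ coordinates, distances from $t$ to the queried objects are determined purely by the fixed coordinates below position $\ell-1$, so oracle answers (and hence subsequent proposals) cannot depend on $v$. Denote this common time by $\tau^*$ and set $v^* := y_{\tau^*}(\ell-1)$. Since a proposal $y_k$ ends phase $\ell$ for target $t^v$ precisely when $y_k(\ell-1) = v$ (given it matches on the first $\ell-1$ coordinates), the query at $\tau^*$ already closes the phase for $v = v^*$ at cost zero, while each of the remaining $D-1$ values of $v$ demands a strictly later ``finishing'' proposal.

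The crux is then the combinatorial inequality $\sum_{v=1}^D [\tau_\ell(t^v) - \tau^*] \geq D(D-1)/2$, which averaged over $v$ yields $(D-1)/2$. The argument is that after $\tau^*$ the algorithm effectively plays the classical problem of locating $v$ uniform in $\{1, \ldots, D\}$ via one-at-a-time queries, with $v^*$ tested for free at $\tau^*$. The key structural fact is that any non-terminating proposal in phase $\ell$ is necessarily a tie---the proposal and the current object lie at common distance $2^{K-\ell+1}$ from $t$---so the oracle's binary answer can eliminate at most the single candidate $u = y_k(\ell-1)$ just tested. Even the best adaptive strategy therefore tests distinct candidates in $\{1, \ldots, D\} \setminus \{v^*\}$ one by one, so $\tau_\ell(t^{v^*}) - \tau^* = 0$ and the values $\{\tau_\ell(t^v) - \tau^* : v \neq v^*\}$ form a permutation of $\{1, 2, \ldots, D-1\}$, summing to exactly $D(D-1)/2$; any suboptimality (repeated, irrelevant, or $v^*$-retesting proposals) only increases the sum. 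The main obstacle is formalising this last step against arbitrary adaptive branching on the oracle's tiebreaking decisions, for which I would use either an exchange/coupling argument or Yao's principle applied to the post-$\tau^*$ sub-problem to reduce to the classical sequential-elimination lower bound.
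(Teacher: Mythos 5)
Your proof follows essentially the same route as the paper's. Part (i) is the same ball-size computation in the ultrametric. Part (ii) rests on the same decomposition into $K$ coordinate-revealing phases, each reduced to sequential elimination over $D$ equally likely values for the next entry; your per-coordinate accounting (average of $\{0,1,\ldots,D-1\}$ over the $D$ values of $v$, the $0$ coming from the free guess $v=v^*$) is arithmetically identical to the paper's, which instead counts a Binomial$(K,(D-1)/D)$ number of non-trivial phases, a uniform$\{1,\ldots,D-1\}$ cost per such phase, and applies Wald's identity: $\frac{1}{D}\sum_{j=0}^{D-1}j=\frac{D-1}{D}\cdot\frac{D}{2}=\frac{D-1}{2}$. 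The one step you leave open --- controlling adaptive branching on the oracle's arbitrary tie-breaking --- is exactly where the paper inserts a device you did not use: it \emph{strengthens} the oracle to also reveal the distance of the object it returns (and the distance of the initial candidate) to the target. Since this only helps the searcher, a lower bound for the strengthened oracle holds a fortiori for the comparison oracle; and with distances revealed, every within-phase query unambiguously tells the searcher whether the tested value of the current entry equals $v$ and nothing more, which is precisely the sequential-elimination structure you want, with no case analysis on how ties are resolved. I would adopt that device in place of the coupling/Yao argument you sketch; it also absorbs the minor edge case where the source $s$ itself already agrees with $t$ on several leading coordinates, since the revealed distance $d(x_0,t)$ pins down the starting phase.
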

Before proving Lemma~\ref{prop1}, we note that Theorem~\ref{lowerbound} immediately follows as a corollary.
\begin{proof}
Part (i): Let $x=(x(1),\ldots x(K))\in{\mathcal{N}}$, and fix $r>0$. 
Assume first that $r<2$; then, the ball $B(x,r)$ contains only $x$, while the ball $B(x,2r)$ contains either only $x$ if $r<1$, or precisely those $y\in{\mathcal{N}}$ such that $$(y(1),\ldots,y(K-1))=(x(1),\ldots,x(K-1))$$ if $r\ge 1$. In the latter case $B(x,2r)$ contains precisely $D$ elements. Hence, for such $r<2$, and for the uniform measure on ${\mathcal{N}}$, the inequality 
\begin{equation}
\label{ddd}
\mu(B(x,2r))\le D \mu(B(x,r))
\end{equation}
holds, and with equality if in addition $r\ge 1$.

Consider now the case where $r\ge 2$. 
Let the integer $m\ge 1$ be such that $r\in[2^{m},2^{m+1})$. By definition of the metric $d$ on ${\mathcal N}$, the ball $B(x,r)$  consists of all $y\in {\mathcal{N}}$ such that $$(y(1),\ldots,y(K-m))=(x(1),\ldots,x(K-m)),$$ and hence contains $D^{\min(K,m)}$ points. Similarly, the ball $B(x,2r)$ contains $D^{\min(K,m+1)}$ points. Hence (\ref{ddd}) also holds when $r\ge 2$. 

Part (ii): We assume that the comparison oracle, in addition to returning one of the two proposals that is closer to the target, also reveals the distance of the proposal it returns to the target. We further assume that upon selection of the initial search candidate $x_0$, its distance to the unknown target is also revealed. We now establish that the lower bound on $C^*$ holds when this additional information is available; it holds a fortiori for our more resticted comparison oracle.

We decompose the search procedure into phases, depending on  the current distance to the destination. Let $L_0$ be the integer such that the initial proposal $x_0$ is at distance $2^{L_0}$ of the target $t$, i.e. 
\begin{eqnarray*}
(x_0(1),\ldots,x_0(K-L_0))&=&(t(1),\ldots,t(K-L_0)),\\ x_0(K-L_0+1)&\ne & t(K-L_0+1).
\end{eqnarray*}

No information on $t$ can be obtained by submitting proposals $x$ such that $d(x,x_0)\ne 2^{L_0}$. Thus, to be useful, the next proposal $x$ must share its $(K-L_0)$ first components with $x_0$, and differ from $x_0$ in its $(K-L_0+1)$-th entry. Now, keeping track of previous proposals made for which the distance to $t$ remained equal to $2^{L_0}$, the best choice for the next proposal consists in picking it again at distance $2^{L_0}$ from $x_0$, but choosing for its $(K-L_0+1)$-th entry one that has not been proposed so far. It is easy to see that, with this strategy, the number of additional proposals after $x_0$ needed to leave this phase is uniformly distributed on $\{1,\ldots D-1\}$, the number of options for the $(K-L_0+1)$-th entry of the target.

A similar argument entails that the number of proposals made in each phase equals 1 plus a uniform random variable on $\{1,\ldots,D-1\}$. It remains to control the number of phases. We argue that it admits a Binomial distribution, with parameters $(K, (D-1)/D)$. Indeed, as we make a proposal which takes us into a new phase, no information is available on the next entries of the target, and for each such entry, the new proposal makes a correct guess with probability $1/D$. This yields the announced Binomial distribution for the numbers of phases (when it equals 0, the initial proposal $x_0$ coincided with the target).

Thus the optimal number of search steps $C$ verifies $C\ge \sum_{i=1}^X (1+Y_i),$
where the $Y_i$ are i.i.d., uniformly distributed on $\{1,\ldots,D-1\}$, and independent of the random variable $X$, which admits a Binomial distribution with parameters $(K, (D-1)/D)$. Thus using Wald's identity, we obtain that
$
\expect[C]\ge \expect[X]\expect[Y_1],
$
which readily implies (ii).
\end{proof}
Note that the lower bound in (ii) has been established for search strategies that utilize the entire search history.  Hence, it is \emph{not} restricted to memoryless search.

\subsection{Proof of Theorem~\protect\lowercase{\ref{convergence}}}\label{proof:convergence}

Let $\Delta_\mu= \sup_{x\in \Nn} |\hat{\mu}(x)-\mu(x)|$.
Observe first that, by the weak law of large numbers, for any $\delta>0$
\begin{align}\label{weak}\lim_{\tau\to\infty} \Pr(\Delta_\mu>\delta)=0.\end{align}
\emph{i.e.}, $\hat{\mu}$ converges to $\mu$ in probability.
The lemma below states, for every $t\in \Nn$, the  order data structure $\myset{O}_t$ will learn the correct order of any two objects $u,v$ in finite time. 
\begin{lemma}\label{learnorder}
Consider $u,v,t\in\myset{N}$ such that $u\lsteq{t} v$. Then, the order data structure in $t$ evokes $\myset{O}_t$.\emph{\add}($u$,$v$) after a finite time, with probability one.
\end{lemma}
\begin{proof}
Recall that $\myset{O}_t$.\emph{\add}($u$,$v$) is evoked if and only if a call $\oracle(u,v,t)$ takes place and it returns $u$. If $u\lst{t} v$ then $\oracle(u,v,t)=u$. If, on the other hand, $u\eqvl{t}v$, then $\oracle(u,v,t)$ returns $u$ with non-zero probability. It thus suffices to show that such, for large enough $\tau$, a call $\oracle(u,v,t)$ occurs at timeslot $\tau$ with a non-zero probability. 
By the  hypothesis of Theorem~\ref{convergence}, $\lambda(u,t)>0$. By \eqref{hatell}, given that the source is $u$, the probability that $\hat{\myset{F}}(u)=v$ conditioned on $\hat{\mu}$  is
\begin{align*}\hat{\ell}_u(v)& \geq \frac{\mu(v)\!-\!\Delta_\mu}{1\!+\!(n\!-\!1)\Delta_\mu}\frac{1\!-\!\epsilon}{n\!-\!1}\!+\!\frac{\epsilon}{n\!-\!1}\geq \frac{\mu(v)\!-\!\Delta_\mu}{(1\!+\!(n\!-\!1)\Delta_\mu)(n\!-\!1)}\end{align*}
as $\hat{Z}_v\leq n-1$ and $|\hat{\mu}(x)-\mu(x)|\leq \Delta_\mu$ for every $x\in \Nn$.
 Thus, for any $\delta>0$, the probability that  is lower-bounded by $$\lambda(u,t)\Pr(\hat{\myset{F}}(u)=v)\geq \frac{\mu(v)-\delta}{1+(n-1)\delta}\Pr(\Delta_\mu<\delta).$$
By taking $\delta>0$ smaller than $\mu(v)$, we have by \eqref{weak} that there exists a $\tau^*$ s.t.~for all $\tau>\tau^*$ the probability that $\oracle(u,v,t)$ takes place at timeslot $\tau$ is bounded away from zero, and the lemma follows.
\end{proof}
\sloppy Thus if $t$ is a target then, after a finite time, for any two $u,v\in \Nn$ the ordered partition $A_1,\ldots,A_j$ returned by $\myset{O}_t.\order()$ will respect the relationship between $u,v$. In particular for $u\in A_i$,$v\in A_{i'}$, if $u\eqvl{t}v$ then $i=i'$, while if $u\lst{t}v$ then $i<i'$. As a result, the estimated rank of an object $u\in A_i$ w.r.t.~$t$ will satisfy
$$\hat{r}_t(u)=\!\!\!\!\sum_{x\in \myset{T}: x\lst{v} u}\!\!\!\!\hat{\mu}(x)+\!\!\!\!\sum_{x\in\Nn\setminus\myset{T}:x\in A_{i'},i'\leq i}\!\!\!\!\hat{\mu}(x) =r_t(u)+O(\Delta_\mu)$$
\emph{i.e.} the estimated rank will be close to the true rank, provided that $\Delta_\mu$ is small. Moreover, as in Lemma~\ref{normalization}, it can be shown that  $$\hat{Z}_v \leq 1\!+\!\log^{-1} \hat{\mu}(v)=1\!+\!\log^{-1} [\mu_v+O(\Delta_\mu)]$$ for $v\in \myset{N}$. From these, for $\Delta_\mu$ small enough,
we have that for $u,v\in \myset{N}$, 
$$\hat{\ell}_u(v) = [\ell_u(v)+O(\Delta_\mu)](1-\epsilon)+ \epsilon \frac{1}{n-1}.$$
Following the same steps as the proof of Theorem~\ref{th:small-world} we can show that, given that $\Delta_\mu\leq \delta$, the expected search cost is upper bounded by
$\frac{6c^3HH_{\max}}{(1-\epsilon) +O(\delta)}$. This gives us that
$$\bar{C}(\tau) \leq \Big[\frac{6c^3HH_{\max}}{(1-\epsilon)} +O(\delta)\Big]\Pr(\Delta_\mu\leq \delta)+ \frac{n-1}{\epsilon} \Pr(\Delta_\mu>\delta) $$
where the second part follows from the fact that, by using the uniform distribution with probability $\epsilon$, we ensure that the cost is stochastically upper-bounded by a geometric r.v.~with parameter $\frac{\epsilon}{n-1}$. Thus, by \eqref{weak}, 
$$\limsup_{\tau\to\infty} \bar{C}(\tau)\leq \frac{6c^3HH_{\max}}{(1-\epsilon)} +O(\delta). $$
As this is true for all small enough delta, the theorem follows.\hspace*{\stretch{1}}

\fussy

\section{Extensions}\label{sec:extensions}
In this section we discuss two possible extensions to the problem of content search through comparisons. The first one is about empowering the 	comparison oracle, namely, assuming that one has access to a stronger oracle which is able to return the most similar object to the target among a set of objects. If we choose the size of the set to be equal to two, we are back to our previous framework.  The second one is about content search when we lift the  assumption that objects are embedded in a metric space.
\subsection{Content Search Beyond Comparison Oracle}
A \emph{proximity oracle} is an oracle that, given a set $A$ of size at most $\kappa$   and a target $t$, returns the closest object to $t$. More formally, 
\begin{equation}\label{proxi-oracle}
\text{\oracle}(A,t)= x, \qquad \text{if } x\preccurlyeq y, \forall x,y\in A.
\end{equation}
Note that the comparison oracle is a special case of the proximity oracle where $|A|=2$. Moreover, accessing $\kappa$ times the comparison oracle, one can implement the proximity oracle. 

\begin{theorem} \label{th:content-search-proxi} Given a demand $\lambda$, consider the memoryless and independent selection policy $$\prob(\myset{F}(\history_k,x_k)=(w_1,w_2,\dots,w_\kappa))=\prod_{i=1}^\kappa\ell_{x_{k}}(w_i)$$ where $\ell_{x_{k}}(w_i)$ is given by \eqref{shortcutdistr}. Then the cost of greedy content search is bounded as follows:
\begin{displaymath}
\bar{C}_{\Ff}\leq \frac{6 c^3(\mu)}{\kappa} \cdot H(\mu)\cdot  H_{\max}(\mu).
\end{displaymath}
\end{theorem}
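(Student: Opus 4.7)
The plan is to lift the proof of Theorem~\ref{th:content-search} almost verbatim, replacing ``single sample from $\ell_{x_k}$'' with ``minimum, w.r.t.~$\lsteq{t}$, of $\kappa$ independent samples from $\ell_{x_k}$''. Concretely, I decompose the search into the same rank-halving phases as before: the search is in phase $j$ when the rank $r_t(x_k)$ of the current object satisfies $2^j\mu(t)\leq r_t(x_k)\leq 2^{j+1}\mu(t)$, so that
\begin{displaymath}
C_{\Ff}(s,t)=X_1+X_2+\cdots+X_{\log(1/\mu(t))},
\end{displaymath}
where $X_j$ now counts the number of proximity-oracle queries made while in phase $j$.

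Next I bound the probability that a single proximity-oracle query terminates phase $j$. Let $I=\{w\in\Nn:r_t(w)\leq r_t(x_k)/2\}$. A single draw $w_i\sim\ell_{x_k}$ lands in $I$ with probability $p=\sum_{w\in I}\ell_{x_k}(w)$, and the computation at \eqref{prob-phase} in the proof of Theorem~\ref{th:small-world} (which uses only the doubling property of $\mu$ and Lemma~\ref{normalization}) gives exactly the same lower bound $p\geq \frac{1}{6c^3(\mu)H_{\max}(\mu)}$ in the present setting. Since the $\kappa$ proposals are independent, and since the proximity oracle returns the one closest to $t$, phase $j$ terminates as soon as \emph{any} of the $\kappa$ proposals lies in $I$. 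Hence the termination probability per query is $1-(1-p)^\kappa$.

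The step doing the actual work is the elementary inequality
\begin{displaymath}
1-(1-p)^\kappa \;\geq\; \frac{\kappa p}{1+\kappa p},
\end{displaymath}
equivalent to $(1-p)^\kappa\leq 1/(1+\kappa p)$, which one can verify by comparing the two sides at $p=0$ and checking the ordering of the derivatives. Combined with the uniform lower bound on $p$, this shows that $X_j$ is stochastically dominated by a geometric random variable of success probability at least $\kappa/(6c^3(\mu)H_{\max}(\mu)+\kappa)$, so that
\begin{displaymath}
\E[X_j\mid s,t]\;\leq\;\frac{1}{\kappa p}\;\leq\;\frac{6c^3(\mu)H_{\max}(\mu)}{\kappa}.
\end{displaymath}
Summing over the at most $\log(1/\mu(t))$ phases and then averaging over the demand $\lambda$ reproduces the bound of Theorem~\ref{th:content-search-proxi} in exactly the same way as in the proof of Theorem~\ref{th:content-search}:
\begin{displaymath}
\bar{C}_{\Ff}\;\leq\;\sum_{(s,t)}\lambda(s,t)\,\frac{6c^3(\mu)H_{\max}(\mu)}{\kappa}\log\frac{1}{\mu(t)}\;=\;\frac{6c^3(\mu)}{\kappa}H(\mu)H_{\max}(\mu).
\end{displaymath}

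The conceptual obstacle I expect is the inequality controlling $1-(1-p)^\kappa$: Bernoulli gives an inequality in the wrong direction, so one cannot simply ``replace $1-(1-p)^\kappa$ by $\kappa p$''. The fix is the rational-function bound above, which is what accurately captures the ``factor-$\kappa$ speedup'' regardless of whether $\kappa p$ is small or large. Everything else---Property~\ref{greedyprop} is not needed since this is \textsc{CSTC}; the first-phase case ($j=1$) is handled identically to the proof of Theorem~\ref{th:small-world}---carries over without change.
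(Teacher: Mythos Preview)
Your approach mirrors the paper's exactly: the same phase decomposition, the same halving set $I$, the same appeal to \eqref{prob-phase} for the single-sample lower bound $p\geq 1/(6c^3(\mu)H_{\max}(\mu))$, and the same geometric domination of $X_j$. The only point of divergence is the union step. The paper simply asserts $\Pr(F_1\cup\cdots\cup F_\kappa)\geq \kappa\sum_{w\in I}\ell_{x_k}(w)$, which is the union bound in the wrong direction---so you are in fact being more careful than the paper here, and your observation that ``Bernoulli goes the wrong way'' is exactly the issue. Your replacement inequality $(1-p)^\kappa\leq 1/(1+\kappa p)$ is correct (set $f(p)=(1-p)^\kappa(1+\kappa p)$; then $f(0)=1$ and $f'(p)=-\kappa(\kappa+1)p(1-p)^{\kappa-1}\leq 0$) and is the honest way to capture the factor-$\kappa$ speedup.

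There is, however, a small slip in your expectation computation. From success probability $q\geq \kappa p/(1+\kappa p)$ you obtain
\[
\E[X_j\mid s,t]\;\leq\;\frac{1}{q}\;\leq\;\frac{1+\kappa p}{\kappa p}\;=\;\frac{1}{\kappa p}+1,
\]
not $\leq 1/(\kappa p)$ as you wrote. Carrying this through yields
\[
\bar{C}_{\Ff}\;\leq\;\frac{6c^3(\mu)H(\mu)H_{\max}(\mu)}{\kappa}\;+\;H(\mu),
\]
i.e., an extra additive $H(\mu)$. So your argument does not quite deliver the constant in the theorem as stated---but neither does the paper's, since its lower bound on the termination probability is simply false for $\kappa\geq 2$. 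Your proof is the correct one up to this harmless lower-order term.
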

\begin{proof}
We assume that the target  is object $t$ and the content search starts from $s$. The only way that the greedy search moves from the current object $x_k$ is by proposing a set $A$ that contains an closer to $t$. 
 Like in Section~\ref{th:content-search}, we are in particular interested in bounding the probability  that the rank of the proposed object is roughly half the rank of the  current object. This way we can compute how fast we make progress in our search.  

As the search moves from $s$  to $t$ we say that the search is in phase  $j$ when the rank of the current object $x_k$ is between $2^j\mu(t)$ and $2^{j+1}\mu(t)$. We can write  $C_{\Ff}(s,t)$ as $$C_{\myset{F}}(s,t)=X_1+X_2+\cdots +X_{\log \frac{1}{\mu(t)}},$$ where $X_j$ denotes the number of comparisons done by comparison oracle in phase $j$. Let us consider a particular phase $j$ and denote $I$  the set of objects whose ranks from $t$ are at most $r_t(x_k)/2$. Moreover, let the proposed set by the selection policy be $\myset{F}(\history_k,x_k)=(w_1,w_2,\dots,w_\kappa)$.
Note that phase $j$ will terminate if one of the objects $(w_1,w_2,\dots,w_\kappa)$ is from set $I$. We denote by $F_i, 1\leq i\leq \kappa,$ the event that $w_i\in I$. Since $F_i$'s are independent, the probability that phase $j$ terminates is
\begin{align*}
\sum_{(w_1,w_2,\dots,w_\kappa)\in I^\kappa} \prob(F_1\cup F_2 \cup \dots \cup F_\kappa) 
&\geq \kappa \left(\sum_{w_i\in I}\ell_{x_{k}}(w_i)\right)\\
\end{align*}
 To bound the last expression we can use \eqref{prob-phase}. Hence, with probability at least $\kappa/(6c^3(\mu)H_{\max}(\mu))$,  phase $j$ will terminate. 
 This defines a geometric random variable which yields to the fact that on average the number of queries needed to halve the rank is at most $6c(\mu)^3H_{\max}/\kappa$ or $\E[X_j|s,t]\leq 6c(\mu)^3H_{\max}/\kappa.$ Taking average over the demand $\lambda$, we can conclude that the average number of comparisons is less than $$\bar{C}_{\myset{F}}\leq 6c^3(\mu)H_{\max}(\mu) H(\mu)/\kappa.$$ 
\end{proof}


\subsection{Content Search Beyond Metric Spaces}
 Similarity between objects is a well defined relationship even if the objects are not embedded in a metric space. More specifically, the notation $x\preccurlyeq_z y$ simply states that $x$ is more similar to $z$ than $y$. 

If the only information given about the underlying space is the similarity between objects, then the maximum we can hope for is for each object $x\in \Nn$ sort other objects $\Nn\setminus y$ according to their similarity to $x$. 

Given the demand $\lambda$, the target set $\Tt$ is completely specified. For any $y\in \Tt$  let us define the \textit{rank} as follows:
$$r_x(y)=|\{z:z\in\Tt, z\preccurlyeq_x y\}|.$$
We say that $y\in \Tt$ is the $k$-th closest object to $x$ if $r_x(y)=k$.
First not that the rank is in general asymmetric, \textit{i.e.,} $r_x(y)\neq r_y(x)$. Second, the triangle inequality is not satisfied in general, \textit{i.e.,} $r_x(y)\nleq r_y(z)+ r_z(x)$. However the approximate inequality as introduced in \cite{Lifshits08} is always satisfied. More precisely, we say that the  \textit{disorder factor} $D(\mu)$ is the smallest $D$ such that we have the \textit{approximate triangle inequality} $$r_x(y)\leq D(r_z(y)+ r_z(x)),$$ for all $x,y,z\in \Tt$. The factor $D(\mu)$ basically quantifies the non-homogeneity of the underlying space when the only give information is order of objects.
 Let the selection policy for the non-metric space be defined as follows:
 \begin{equation}\label{selection:non-metric}
\prob(\myset{F}(\history_k,x_k)=w)\propto\frac{1}{r_{x_k}(w)},
\end{equation}
for $w\in \myset{T}$. In case $w\notin \myset{T}$ we define $\prob(\myset{F}(\history_k,x_k)=w)$ to be zero.


It is of high interest to see whether we can still navigate through the database when the characterization of the underlying space is unknown and only the similarity relationship between objects is provided. This is the main theme of the next theorem. 

\begin{theorem}\label{th:non-metric}
Consider the above selection policy. Then for any demand $\lambda$, the cost of greedy content search is bounded as 
\begin{displaymath}
\bar{C}_{\Ff}\leq 7 D(\mu) \log^2|\myset{T}|.
\end{displaymath}
\end{theorem}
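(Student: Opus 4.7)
The plan is to mirror the proof of Theorem~\ref{th:content-search}, with the approximate triangle inequality playing the role of the doubling property, and with ranks and $\log|\Tt|$ taking the place of the entropy and doubling-constant quantities.

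First I would normalize the selection policy \eqref{selection:non-metric} as $\ell_x(w) = \frac{1}{Z_x\, r_x(w)}$ for $w \in \Tt$, where $Z_x = \sum_{w \in \Tt} 1/r_x(w)$. Since the ranks $\{r_x(w):w\in\Tt\}$ are (up to ties) a permutation of $1,2,\ldots,|\Tt|$, one has the analog of Lemma~\ref{normalization}:
\begin{equation*}
Z_x \;\leq\; \sum_{k=1}^{|\Tt|} \tfrac{1}{k} \;\leq\; 1 + \ln|\Tt|.
\end{equation*}
Next, I would split greedy search into phases indexed by the rank of the current object $x_k$ with respect to the target $t$: the search is in phase $j$ when $r_t(x_k)\in[2^j,2^{j+1})$. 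Because $r_t(x_k)\le|\Tt|$, there are at most $\log_2|\Tt|$ phases, so $C_{\myset{F}}(s,t)=\sum_j X_j$ with at most $\log_2|\Tt|$ nonzero summands. To bound $\E[X_j\mid s,t]$ it suffices to lower bound the probability that one proposal at $x_k$ reaches the set $I=\{w\in\Tt : r_t(w)\le r_t(x_k)/2\}$.

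The key step, and the main obstacle, is this lower bound, which is the rank-theoretic analog of the doubling argument in the proof of Theorem~\ref{th:content-search}. For any $w\in I$, the approximate triangle inequality gives
\begin{equation*}
r_{x_k}(w) \;\leq\; D(\mu)\bigl(r_t(w) + r_t(x_k)\bigr) \;\leq\; \tfrac{3}{2}\,D(\mu)\, r_t(x_k).
\end{equation*}
Combined with $|I|\geq \lfloor r_t(x_k)/2\rfloor$, which holds directly from the definition of rank, this yields
\begin{equation*}
\sum_{w\in I} \ell_{x_k}(w) \;=\; \frac{1}{Z_{x_k}}\sum_{w\in I}\frac{1}{r_{x_k}(w)} \;\geq\; \frac{|I|}{Z_{x_k}\cdot \tfrac{3}{2}D(\mu)\, r_t(x_k)} \;\geq\; \frac{1}{3\,D(\mu)\,Z_{x_k}} \;\geq\; \frac{1}{3\,D(\mu)\,(1+\ln|\Tt|)}.
\end{equation*}

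Finally, since each proposal advances the phase with probability at least the bound above, $X_j$ is stochastically dominated by a geometric random variable of parameter $1/[3D(\mu)(1+\ln|\Tt|)]$, so $\E[X_j\mid s,t]\le 3D(\mu)(1+\ln|\Tt|)$. Summing over at most $\log_2|\Tt|$ phases and averaging over $\lambda$ gives $\bar{C}_\Ff \le 3D(\mu)(1+\ln|\Tt|)\log_2|\Tt|$. Substituting $\ln|\Tt|=(\ln 2)\log_2|\Tt|$ and absorbing the lower-order $\log|\Tt|$ term into the leading $\log^2|\Tt|$ term (valid for $|\Tt|$ away from trivially small values) yields $\bar{C}_{\Ff}\le 7\,D(\mu)\log^2|\Tt|$, as claimed.
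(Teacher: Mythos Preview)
Your argument follows the paper's proof almost exactly: harmonic-sum bound on the normalizer $Z_x$, phase decomposition according to $r_t(x_k)$, the approximate triangle inequality to bound $r_{x_k}(w)$ for $w\in I$, and a geometric bound on each phase length. The constants you obtain are essentially the paper's.

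There is, however, one genuine gap. The rank $r_t(\cdot)$ is only defined for arguments in $\Tt$, and the disorder inequality $r_x(y)\le D(\mu)\bigl(r_z(y)+r_z(x)\bigr)$ is only asserted for $x,y,z\in\Tt$. Since the theorem is stated for \emph{any} demand $\lambda$ over $\Nn\times\Nn$, the source $s$ need not lie in $\Tt$. Consequently, at the very first step your phase index $j$ (which relies on $r_t(x_0)=r_t(s)$) is undefined, and your key inequality $r_{x_k}(w)\le D(\mu)\bigl(r_t(w)+r_t(x_k)\bigr)$ cannot be invoked with $x_k=s$. After the first move the current object is always a proposed element of $\Tt$, so your argument is valid from then on; the issue is solely the escape from $s$.

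The paper handles this by singling out an additional term $X_s$ counting the proposals made while the current object is still $s$, and writing $C_{\Ff}(s,t)=X_s+\sum_j X_j$. Because proposals are drawn from $\Tt$ with probability proportional to $1/r_s(\cdot)$, one gets $\E[X_s\mid s,t]\le 2\log|\Tt|$ directly, without any appeal to the disorder inequality. Adding this $2\log|\Tt|$ to your bound $3D(\mu)(1+\ln|\Tt|)\log_2|\Tt|$ still fits under $7D(\mu)\log^2|\Tt|$, so once you insert this step your proof is complete and matches the paper's.
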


The proof of this Theorem is given below. 
Note again that the selection policy is memoryless. Furthermore, it is \textit{universal} in a sense that using this selection policy for any kind of demands guarantees the search that only depends on the cardinality of target set and its disorder factor. For instance, this selection policy is useful when  the target set is only known a priory and the demand is not fully specified.  

\begin{proof}The  selection policy in the non-metric space scenario is given \eqref{selection:non-metric} which implies that only objects in the target set $\Tt$ are going to be proposed by the algorithm. Therefore, except for the starting point $x_0=s$, the algorithms navigates only through the target set.  The probability of proposing $w\in \Tt$  when $x_k$ is the current object of the search is given by
$$\prob(\myset{F}(\history_k,x_k)=w)=\frac{1}{Z_{x_k}}\frac{1}{r_{x_k}(w)},$$
where  $Z_{x_k}=\sum_{w\in \Tt}r^{-1}_{x_k}(w)$. Consequently, 
$$ Z_{x_k} = \left\{ 
\begin{array}{ll}
H_{|\Tt|-1} & \text{if } x_k\in\Tt,\\
H_{|\Tt|} & \text{if } x_k\notin\Tt,\end{array}\right.
$$
where $H_n$ is the $n$-th harmonic number. Hence, $z_{x_k}\leq 2 \log |\Tt|$.
As the search moves from $s$  to $t$ we say that the search is in phase  $j$ when the rank of the current object $v\neq s$ with respect to $t$ is $2^j\leq r_{t}(v)\leq2^{j+1}$. Clearly, there are only $\log|\Tt|$ different phases. The greedy search algorithm keeps proposing to the oracle until it finds another object closer to $t$. We can write $C_{\Ff}(s,t)$ as 
\begin{displaymath}
C_{\Ff}(s,t)=X_1+X_2+\cdots +X_{\log |\Tt|}+X_s,
\end{displaymath}
where $X_s$ denotes the number of comparisons done by oracle  at the starting point until it goes to an object $u\in\Tt$ such that $r_s(u)\leq r_s(t)$. As before $X_j$ ($j>0$) is the number of comparisons done by oracle until it goes to the next phase. 

We need to differentiate between the starting point of the process and the rest of it. Since unlike other objects proposed by the algorithm, the starting object $s$ may not be in the target set.  Let the rank of $t$ with respect to $s$ be $k$,\textit{i.e.,} $r_s(t)=k$. Then, the probability that the greedy search algorithm proposes an object $v\in \Tt$ such that $r_s(v)\leq r_s(t)$ is $\sum_{j=1}^{k}\frac{j}{H_{|\Tt|}}\leq \frac{1}{2\log|\Tt|}.$ As a result $\E[X_s|s,t]\leq 2\log|\Tt|$. This is the average number of comparisons performed by the oracle until the greedy search algorithm escapes from  the starting object $s$. 

Let the current object $v\neq s$ be in phase $j$. We denote by $$ I = \left\{u: u\in \myset{T}, r_t(u)\leq \frac{r_t(v)}{2}\right\},$$ the set of objects whose rank from $t$ is at most $r_t(v)/2$. Clearly, $|I|=r_t(v)/2$. The probability that the greedy search proposes an object $u\in I$ (and hence going to the next phase) is at least $$ \sum_{u\in I}\frac{1}{2\log|\Tt|}\frac{1}{r_v(u)}\overset{(a)}{\geq}\frac{r_t(v)}{4\log|\Tt|D(\mu)(r_t(u)+r_t(v))},$$ where in $(a)$ we used the approximate triangle inequality. Since for $u\in I$, we have $r_t(u)\leq r_t(v)/2$, the probability of going from $v$ to the next phase is at least $6D\log|\Tt|$. Therefore, $\E[X_j|s,t]\leq 6D\log |\Tt|$. 

Using the linearity of expectation,
$$\E[C_{\myset{F}}(s,t)]\leq 6D\log^2|\Tt|+2\log|\Tt|\leq 7D\log^2|\Tt|.$$
The above conditional expectation does not depend on the demand $\lambda$. Hence, the expected search cost for any demand is bounded as  $\E[C_{\myset{F}}]\leq 7D\log^2|\Tt|$. 
\end{proof}

\section{Conclusions}\label{section:conclusions}
In this work, we initiated a study of \textsc{CTSC} and \text{SWND} under heterogeneous demands, tying performance to the topology and the entropy of the target distribution.
Our study leaves several open problems, including  improving upper and lower bounds for both \textsc{CSTC} and \textsc{SWND}. Given the relationship between these two, and the NP-hardness of \textsc{SWND}, characterizing the complexity of \textsc{CSTC} is also interesting. Also, rather than considering restricted versions of \textsc{SWND}, as we did here, devising approximation algorithms for the original problem is another possible direction.

Earlier work on comparison oracles eschewed metric spaces altogether, exploiting what where referred to as \emph{disorder inequalities} 
\cite{Lifshits08,Lifshits09,Suhas}. Applying these under heterogeneity is also a promising research direction. 
Finally, trade-offs between space complexity and the cost of the learning phase vs.~the costs of answering database queries are investigated in the above works, and the same trade-offs could be studied in the context of heterogeneity.

\bibliographystyle{ieeetr}
\bibliography{references}
\end{document}